\definecolor{Gray}{gray}{0.9}
\pgfplotsset{compat=1.18}
\theoremstyle{plain}
\newtheorem{theorem}{Theorem}[section]
\newtheorem{proposition}[theorem]{Proposition}
\newtheorem{lemma}[theorem]{Lemma}
\theoremstyle{definition}
\newtheorem{definition}[theorem]{Definition}
\newtheorem{assumption}[theorem]{Assumption}
\newtheorem{axiom}[theorem]{Axiom}
\theoremstyle{remark}
\newtheorem{remark}[theorem]{Remark}
\DeclareMathOperator*{\argmin}{arg\,min}
\DeclareMathOperator{\tr}{tr}
\DeclareMathOperator{\expm}{expm}
\DeclareMathOperator{\logdet}{log det}
\newcommand{\norm}[1]{\left\lVert#1\right\rVert}
\newcommand{\abs}[1]{\left\lvert#1\right\rvert}
\newcommand{\R}{\mathbb{R}}
\newcommand{\matI}{\mathbf{I}}
\newcommand{\mA}{\mathbf{A}}
\newcommand{\mW}{\mathbf{W}}
\newcommand{\mU}{\mathbf{U}}
\newcommand{\mH}{\mathbf{H}}
\newcommand{\mO}{\mathcal{O}}
\newcommand{\frobinner}[2]{\left\langle #1, #2 \right\rangle_F}
\newcommand{\hadamard}{\circ}
\newcommand{\smoothabs}[2]{\sqrt{#1^2 + #2^2}}
\title{Smooth, Sparse, and Stable: Finite-Time Exact Skeleton Recovery via Smoothed Proximal Gradients}
\author[1]{Rui Wu\thanks{Email: \texttt{wurui22@mail.ustc.edu.cn}}}
\author[1]{Yongjun Li\thanks{Corresponding author. Email: \texttt{lionli@ustc.edu.cn}}}
\affil[1]{School of Management, University of Science and Technology of China, Hefei, Anhui, China}
\date{}
\begin{document}

\maketitle

\begin{abstract}
    Continuous optimization has significantly advanced causal discovery, yet existing methods (e.g., NOTEARS) generally guarantee only asymptotic convergence to a stationary point. This often yields dense weighted matrices that require arbitrary post-hoc thresholding to recover a DAG. This gap between continuous optimization and discrete graph structures remains a fundamental challenge.
    
    In this paper, we bridge this gap by proposing the \textbf{Hybrid-Order Acyclicity Constraint (AHOC)} and optimizing it via the \textbf{Smoothed Proximal Gradient (SPG-AHOC)}.
    Leveraging the \textbf{Manifold Identification Property} of proximal algorithms, we provide a rigorous theoretical guarantee: the \textbf{Finite-Time Oracle Property}.
    We prove that under standard identifiability assumptions, SPG-AHOC recovers the \textbf{exact DAG support} (structure) in finite iterations, even when optimizing a smoothed approximation.
    This result eliminates structural ambiguity, as our algorithm returns graphs with \textbf{exact zero entries} without heuristic truncation.
    Empirically, SPG-AHOC achieves state-of-the-art accuracy and strongly corroborates the finite-time identification theory.
\end{abstract}

\keywords{Causal Discovery \and Continuous Optimization \and Proximal Gradient \and Acyclicity Constraint}

\section{Introduction}
\label{sec:intro}

The transition from correlation to causation relies on dissecting the precise structure of interactions. However, a chasm exists between the discrete nature of graphs and the continuous optimization tools used to find them.
Historically, this problem was addressed using combinatorial search algorithms \citep{Chickering2002}, which scale poorly with the number of variables $d$.
The field witnessed a paradigm shift with the introduction of NOTEARS \citep{Zheng2018NOTEARS}, which reformulated the discrete Directed Acyclic Graph (DAG) constraint into a continuous equality constraint, enabling the use of efficient continuous optimization techniques.

\textbf{The Gap: Asymptotic Convergence vs. Exact Structure.}
Despite the computational advances, a critical disconnect remains. Existing methods typically guarantee only \textit{asymptotic convergence} to a stationary point. In practice, this results in dense weighted adjacency matrices where "zero" edges are represented by small non-zero values (e.g., $10^{-4}$).
This necessitates arbitrary \textbf{post-hoc thresholding} to recover a valid DAG.
In scientific discovery, this introduces \textbf{Scientific Ambiguity}: does a small weight represent a weak causal link or numerical noise?
Furthermore, standard constraints suffer from \textbf{Structural Constraint Instability (SCI)}, facing a trade-off between vanishing gradients (hindering sparsity) and exploding gradients (causing numerical divergence).

\textbf{Our Solution: Topological Locking.}
To bridge this gap, we propose a theoretically grounded framework that ensures \textbf{Finite-Time Exact Recovery}.
We introduce the \textbf{Hybrid-Order Acyclicity Constraint (AHOC)}, which satisfies a complete set of stability axioms.
Crucially, we optimize this constraint using the \textbf{Smoothed Proximal Gradient (SPG-AHOC)}.
We identify a mechanism we term \textbf{Topological Locking} (visualized in Figure \ref{fig:manifold_id}): while standard gradient descent asymptotically approaches zero, the proximal operator explicitly "snaps" the optimization trajectory onto the sparse manifold in finite steps once it enters a \textbf{Geometric Stability Zone}.
This allows our method to return graphs with \textbf{exact zero entries}, eliminating the need for heuristic truncation.

\textbf{Contributions.} Our main contributions are threefold:
\begin{itemize}
    \item \textbf{Axiomatic Framework \& Impossibility Theorem:} We identify the root causes of instability in existing methods and prove that standard smooth constraints cannot simultaneously satisfy gradient non-vanishing and boundedness axioms. We propose the AHOC family to resolve this.
    \item \textbf{Finite-Time Oracle Property:} We provide a rigorous proof that SPG-AHOC achieves exact support recovery in finite iterations. We show that the "Topological Locking" effect of the proximal operator makes the support recovery robust to the smoothing bias required for optimization.
    \item \textbf{SOTA Performance \& Robustness:} Empirically, SPG-AHOC achieves state-of-the-art accuracy on synthetic benchmarks and real-world data (Sachs), while demonstrating superior computational efficiency in low-to-mid dimensions compared to unconstrained baselines.
\end{itemize}

\section{Theoretical Analysis of Acyclicity Constraints}
\label{sec:theory}

In this section, we analyze the numerical properties of differentiable acyclicity constraints and propose our solution.

\subsection{Our Method: An Axiomatic Diagnosis}
\label{subsec:axioms}

We propose three fundamental axioms for stable acyclicity constraints $h(\mW)$:

\begin{axiom}[Correctness]
    \label{axiom:correctness}
    $h(\mW) = 0 \iff \mathcal{G}(\mW) \text{ is a DAG}$.
\end{axiom}
\textit{Consequence of Violation: Converges to incorrect structures.}

\begin{axiom}[$L_1$-Synergy]
    \label{axiom:l1_synergy}
    $\liminf_{\delta \to 0^+} \norm{\nabla_\mW h(\delta \cdot \mU)}_F > 0$.
\end{axiom}
\textit{Consequence of Violation (Type I SCI): $L_1$ gradient dominates, hindering sparse DAG discovery.}

\begin{axiom}[Numerical Boundedness]
    \label{axiom:boundedness}
    $\sup_{\mW} |h(\mW)| < \infty$ and $\sup_{\mW} \norm{\nabla_\mW h(\mW)}_F < \infty$.
\end{axiom}
\textit{Consequence of Violation (Type II SCI): Exploding gradients cause numerical divergence (`NaN`/`Inf`). This can be fatal unless the optimizer employs a "safety net".}

\subsection{Core Finding 1: The \texorpdfstring{$\mathcal{H}$}{H} Family and its Incompatibility}
\label{subsec:h_family_and_incompatibility}

We reveal a fundamental \textbf{Impossibility Theorem} for the standard family of constraints.

\begin{definition}[Standard Hadamard Constraint Class $\mathcal{H}$]
    \label{def:h_family_rigorous}
    Let $\mathcal{H}$ be the family of spectral constraints where the dependency on $\mW$ acts exclusively through the entry-wise magnitude matrix $\mA = \mW \hadamard \mW$ (or $\mA = \abs{\mW}$). Specifically, $h(\mW) = F(\mA)$ where $F$ is a $C^1$ smooth function on the cone of non-negative matrices.
\end{definition}

\begin{theorem}[Refined Incompatibility Theorem]
    \label{thm:incompatibility}
    For any constraint $h \in \mathcal{H}$ satisfying $h(\mathbf{0})=0$, if the core function $F$ is smooth at the origin, then $h(\mW)$ cannot simultaneously satisfy Axiom \ref{axiom:l1_synergy} ($L_1$-Synergy) and Axiom \ref{axiom:boundedness} (Numerical Boundedness).
\end{theorem}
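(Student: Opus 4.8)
The plan is to establish something marginally stronger than the stated incompatibility: for any $h \in \mathcal{H}$ with a $C^1$ core function $F$ and $h(\mathbf{0}) = 0$, Axiom~\ref{axiom:l1_synergy} ($L_1$-Synergy) \emph{already fails in isolation}, so in particular it cannot coexist with Axiom~\ref{axiom:boundedness}. Every step runs through the chain rule, exploiting that the $\mW$-dependence enters only via $\mA$ together with the smoothness of $F$ at the origin (and, in the absolute-value variant, Correctness --- Axiom~\ref{axiom:correctness} --- which is built into the notion of an acyclicity constraint). I would split along the two forms permitted by Definition~\ref{def:h_family_rigorous}. For $\mA = \mW \hadamard \mW$, the chain rule gives $\nabla_\mW h(\mW) = 2\,\mW \hadamard \nabla F(\mW \hadamard \mW)$, so the gradient always carries an explicit factor of $\mW$; evaluating at $\mW = \delta\mU$ and bounding $\nabla F$ near $\mathbf{0}$ by continuity ($\nabla F(\delta^2\,\mU \hadamard \mU) \to \nabla F(\mathbf{0})$) yields $\norm{\nabla_\mW h(\delta\mU)}_F \le 2\delta\,\maxelem{\mU}\,\norm{\nabla F(\delta^2\,\mU \hadamard \mU)}_F \to 0$, hence $\liminf_{\delta \to 0^+}\norm{\nabla_\mW h(\delta\mU)}_F = 0$ and Axiom~\ref{axiom:l1_synergy} is violated no matter what Axiom~\ref{axiom:boundedness} does.

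For $\mA = \abs{\mW}$, I would restrict to the interior where $\abs{\mW}$ is differentiable and take the perturbation direction $\mU$ to be a hollow matrix whose off-diagonal entries are all nonzero (the admissible directions in DAG learning, where the weighted adjacency matrix is hollow). There the chain rule gives $\nabla_\mW h(\mW) = \mathrm{sign}(\mW) \hadamard \nabla F(\abs{\mW})$, so by continuity of $\nabla F$ we get $\nabla_\mW h(\delta\mU) \to \mathrm{sign}(\mU) \hadamard \nabla F(\mathbf{0})$ and $\norm{\nabla_\mW h(\delta\mU)}_F \to \norm{\offdiag(\nabla F(\mathbf{0}))}_F$ as $\delta \to 0^+$. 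Thus Axiom~\ref{axiom:l1_synergy} would force $\partial F / \partial A_{ij}(\mathbf{0}) \ne 0$ for some $i \ne j$. But for $i \ne j$ the single-edge graph $\mathcal{G}(c\,\mathbf{E}_{ij})$ is a DAG for every $c \ge 0$, so Correctness makes $F(c\,\mathbf{E}_{ij})$ identically zero along that ray, and differentiating at $c = 0$ forces $\partial F / \partial A_{ij}(\mathbf{0}) = 0$ for all $i \ne j$. Hence $\liminf_{\delta \to 0^+}\norm{\nabla_\mW h(\delta\mU)}_F = 0$ again, and Axiom~\ref{axiom:l1_synergy} fails.

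Putting the cases together, no $h \in \mathcal{H}$ with $C^1$ core and $h(\mathbf{0}) = 0$ satisfies Axiom~\ref{axiom:l1_synergy}, so a fortiori it cannot satisfy Axioms~\ref{axiom:l1_synergy} and~\ref{axiom:boundedness} simultaneously; I would close by remarking that the obstruction therefore lies entirely on the gradient-non-vanishing side and that escaping it requires leaving the smooth-$\mathcal{H}$ framework, which motivates AHOC. The branch I expect to be the main obstacle is $\mA = \abs{\mW}$: one must (i) deal cleanly with the non-smoothness of $\abs{\cdot}$ by restricting to the interior and being explicit that the legitimate perturbation directions are hollow, and (ii) realize that killing $\nabla F(\mathbf{0})$ genuinely needs Correctness --- without it, $h(\mW) = \tanh(\abs{W_{12}} + \abs{W_{21}})$ would satisfy both axioms --- so the argument must invoke that $h$ is a bona fide acyclicity constraint (equivalently, permutation invariance together with the single-edge-DAG fact). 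The $\mA = \mW \hadamard \mW$ branch, by contrast, is a one-line chain-rule estimate needing only $C^1$-smoothness of $F$ at the origin.
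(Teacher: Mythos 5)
Your proof is correct, and it takes a route that is partly the same and partly genuinely stronger than the paper's. For the $\mA = \mW \hadamard \mW$ form, both arguments hinge on the identical chain-rule observation that the gradient carries an explicit factor $2\mW$; the difference is that the paper bounds the outer term $\nabla F$ near the origin by \emph{invoking Axiom \ref{axiom:boundedness}} and then concludes Axiom \ref{axiom:l1_synergy} fails (supplemented by a separate "defense" showing that non-smooth cores such as $F(x)=x^\gamma$ would instead blow up the gradient, violating Axiom \ref{axiom:boundedness}), whereas you bound $\nabla F$ directly by continuity from the theorem's own smoothness hypothesis, thereby proving the sharper statement that Axiom \ref{axiom:l1_synergy} fails in isolation and the incompatibility follows a fortiori. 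Your treatment of the $\mA = \abs{\mW}$ branch is a genuine addition: the paper's appendix proof simply never addresses that parenthetical case, while you dispose of it via the single-edge-DAG ray $F(c\,\mathbf{E}_{ij}) \equiv 0$ forcing $\offdiag(\nabla F(\mathbf{0})) = \mathbf{0}$. Note, however, that this branch uses Axiom \ref{axiom:correctness}, which is not among the theorem's explicitly stated hypotheses ($h \in \mathcal{H}$, $h(\mathbf{0})=0$, $F$ smooth); your $\tanh(\abs{W_{12}}+\abs{W_{21}})$ example shows some such structural assumption is truly necessary for that branch, so your reading (acyclicity constraints satisfy Correctness by definition) is the right repair, and it exposes a hypothesis the paper leaves implicit. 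The minor care needed about non-differentiability of $\abs{\cdot}$ on the diagonal is adequately handled by restricting to hollow perturbation directions, consistent with the DAG-learning setting.
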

\textit{Implication: Standard constraints face an unavoidable trade-off. Quadratic dependencies (required for smoothness at origin) inherently conflict with the linear sensitivity required by $L_1$ regularization.}

\begin{figure}[t]
    \centering
    \includegraphics[width=0.7\columnwidth]{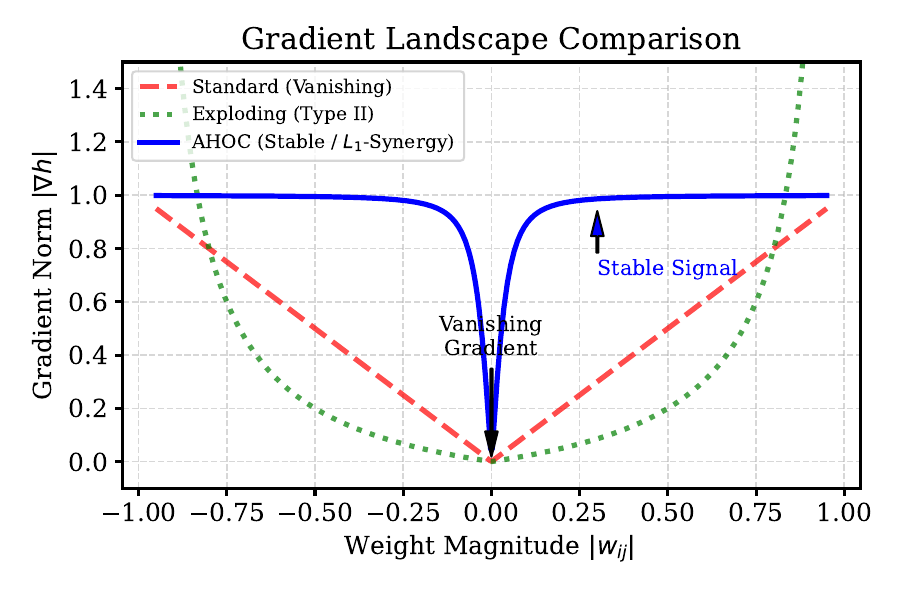}
    \vspace{-0.1in}
    \caption{\textbf{Visualizing Structural Constraint Instability (SCI).}
    \textit{Left (Red/Green):} Standard constraints suffer from either vanishing gradients at the origin (Type I) or exploding gradients near the boundary (Type II).
    \textit{Right (Blue):} Our AHOC formulation maintains a stable, non-vanishing gradient signal near zero due to the smoothed $L_1$-Synergy (Axiom 2), while preventing explosion via normalization (Axiom 3).}
    \label{fig:sci_visualization}
\end{figure}

\subsection{An Intermediate Step: AAC Solves Type II SCI}
\label{subsec:aac}

Introducing Adaptive Scale Normalization (ASN), $\phi_{\text{ASN}}(\mW; \epsilon) = \frac{\mA}{\norm{\mA}_F + \epsilon}$, yields the AAC constraint:
$$ h_{\text{AAC}}(\mW; \epsilon) \coloneqq \tr\left( \expm\left( \phi_{\text{ASN}}(\mW; \epsilon) \right) \right) - d $$

\begin{theorem}[Axiomatic Analysis of AAC]
    \label{thm:aac_analysis}
    AAC satisfies Axioms \ref{axiom:correctness} and \ref{axiom:boundedness}, but fails Axiom \ref{axiom:l1_synergy}.
\end{theorem}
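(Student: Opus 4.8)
The plan is to dispatch the three axioms in turn, letting one explicit $\epsilon$-dependent gradient estimate do the heavy lifting: it settles \textbf{Numerical Boundedness} (Axiom \ref{axiom:boundedness}) directly and, when specialized to rays through the origin, immediately forces the failure of \textbf{$L_1$-Synergy} (Axiom \ref{axiom:l1_synergy}). For \textbf{Correctness} (Axiom \ref{axiom:correctness}), I would first observe that $\phi_{\text{ASN}}(\mW;\epsilon) = \lambda(\mW)\,\mA$ where $\mA = \mW\hadamard\mW$ is entry-wise nonnegative and $\lambda(\mW) = (\norm{\mA}_F+\epsilon)^{-1}\in(0,1/\epsilon]$ is a positive scalar; hence $\phi_{\text{ASN}}(\mW)$ has exactly the support of $\mA$, i.e.\ encodes the graph $\mathcal{G}(\mW)$. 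Using $\tr(\expm(M)) = d + \sum_{k\ge1}\tfrac{1}{k!}\tr(M^k)$, in which $\tr(M^k)\ge 0$ whenever $M$ is entry-wise nonnegative, and noting that multiplying by $\lambda^k>0$ does not change whether $\tr((\lambda\mA)^k)$ vanishes, reduces the claim to the standard NOTEARS fact that $\tr(\expm(\mA)) = d$ iff $\mathcal{G}(\mA)$ has no directed cycle (the degenerate case $\mW=\mathbf 0$ giving the empty DAG with $h_{\text{AAC}}=0$).

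For Numerical Boundedness, write $h_{\text{AAC}} = g\circ\phi_{\text{ASN}}$ with $g \coloneqq \tr\circ\expm - d$ continuous; since $\phi_{\text{ASN}}$ maps into the compact set of entry-wise nonnegative matrices of Frobenius norm at most $1$, the value $h_{\text{AAC}}$ is bounded (in fact $0\le h_{\text{AAC}}< d(e-1)$ from $\rho(\phi_{\text{ASN}})<1$). The gradient is the real work, and the main obstacle. Using $\nabla_\mW h = 2\,\mW\hadamard\nabla_\mA h$ (from $A_{ij}=W_{ij}^2$, with $\nabla_\mA h$ the array of partials $\partial h/\partial A_{ij}$) and the quotient rule for $\phi_{\text{ASN}}$, one obtains, for $\mW\ne\mathbf 0$ and $\phi \coloneqq \phi_{\text{ASN}}(\mW)$, $\psi \coloneqq \norm{\mA}_F$,
\[
(\nabla_\mA h)_{ij} = \frac{(e^{\phi})_{ji}}{\psi+\epsilon} - \frac{\frobinner{(e^{\phi})^{\top}}{\mA}}{(\psi+\epsilon)^2}\cdot\frac{A_{ij}}{\psi}.
\]
The naive difficulty is that $\nabla_\mW h$ carries an \emph{unbounded} factor $\mW$ while $\nabla_\mA h$ carries a $1/\psi$ singularity, so a crude bound diverges. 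Both pathologies are killed by a single elementary observation: $\abs{W_{ij}} = \sqrt{A_{ij}}\le\sqrt\psi$, which tames the $\mW$ factor, and simultaneously $A_{ij}/\psi\le 1$, which cancels the singularity. Together with $\norm{e^{\phi}}_F<\sqrt d\,e$ (since $\norm{\phi}_F<1$) this gives $\abs{(\nabla_\mA h)_{ij}} = O(1/(\psi+\epsilon))$, hence $\abs{(\nabla_\mW h)_{ij}} = O\!\big(\sqrt\psi/(\psi+\epsilon)\big)$; by AM--GM $\sqrt\psi/(\psi+\epsilon)\le 1/(2\sqrt\epsilon)$, yielding the uniform bound $\norm{\nabla_\mW h_{\text{AAC}}(\mW;\epsilon)}_F = O(d^{3/2}e/\sqrt\epsilon)$. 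This extends to $\mW = \mathbf 0$ because $\psi(\mW) = \norm{\mW}_4^2$ is $C^1$ with vanishing gradient there, so $h_{\text{AAC}}$ is $C^1$ at the origin with $\nabla h_{\text{AAC}}(\mathbf 0) = \mathbf 0$.

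Finally, to see that $L_1$-Synergy fails, specialize the gradient bound to $\mW=\delta\mU$: since $\psi(\delta\mU)=\delta^2\norm{\mU\hadamard\mU}_F$, we get $\norm{\nabla_\mW h_{\text{AAC}}(\delta\mU;\epsilon)}_F = O\!\big(\delta\,\norm{\mU\hadamard\mU}_F^{1/2}/(\delta^2\norm{\mU\hadamard\mU}_F+\epsilon)\big) = O(\delta)\to 0$, so $\liminf_{\delta\to0^+}\norm{\nabla_\mW h_{\text{AAC}}(\delta\mU)}_F = 0$ for \emph{every} direction $\mU$; Axiom \ref{axiom:l1_synergy} fails. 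The point worth flagging is that the very AM--GM inequality used to bound the gradient is exactly what annihilates it at the origin — the boundedness/synergy conflict made concrete — even though AAC is not formally covered by Theorem \ref{thm:incompatibility}, whose core function must be smooth at the origin, whereas $F(\mA) = \tr(\expm(\mA/(\norm{\mA}_F+\epsilon)))-d$ is not $C^1$ at $\mA=\mathbf 0$; it is precisely the quadratic reparametrization $\mA = \mW\hadamard\mW$ that restores $C^1$-regularity in $\mW$, at the price of the vanishing gradient. Everything else (the support argument for Correctness, boundedness of $h_{\text{AAC}}$ itself) is routine.
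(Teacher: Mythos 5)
Your proof is correct and follows essentially the same route as the paper's: Correctness via positivity of the normalizing denominator plus the standard trace-exponential/nilpotency equivalence, Boundedness from the fact that $\phi_{\text{ASN}}$ maps into the unit Frobenius ball, and failure of $L_1$-Synergy by evaluating the gradient along rays $\mW=\delta\mU$ and showing it scales as $\mathcal{O}(\delta/\epsilon)\to 0$. Your treatment is in fact more complete where the paper is merely assertive — the explicit quotient-rule gradient, the cancellation $\abs{W_{ij}}\le\sqrt{\psi}$, $A_{ij}/\psi\le 1$ with the AM--GM bound $\mathcal{O}(1/\sqrt{\epsilon})$, and the $C^1$ check at the origin substantiate the paper's one-line claim that ``the operator norm of the differential of the normalized map is bounded.''
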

\textit{Implication: AAC fixes Type II SCI (satisfies A3) but not Type I SCI (fails A2), as its gradient is $\mO(\norm{\mW}_F / \epsilon)$ and still vanishes.}

\subsection{The Theoretical Solution: Hybrid-Order Constraints (AHOC/S-AHOC)}
\label{subsec:ahoc_construction}

To satisfy Axiom 2 ($L_1$-Synergy), which requires a non-vanishing gradient signal as $\mW \to 0$, we introduce a first-order term via the Hybrid-Order Core (HOC):
$$ \mathbf{M}(\mW; \alpha) \coloneqq \alpha (\mW \hadamard \mW) + (1-\alpha) \abs{\mW}, \quad \alpha \in [0, 1) $$
Applying normalization yields two variants:

\begin{definition}[AHOC and S-AHOC Constraints]
    \label{def:ahoc_family}
    Let $h_{\exp}(\mathbf{M}) = \tr(\expm(\mathbf{M})) - d$.
    \begin{itemize}
    \item \textbf{AHOC (ASN Core):}
    \[
        h_{\text{AHOC}}(\mW; \alpha, \epsilon) \coloneqq h_{\exp}\left(\frac{\mathbf{M}(\mW; \alpha)}{\norm{\mathbf{M}(\mW; \alpha)}_F + \epsilon}\right)
    \]
    \item \textbf{S-AHOC (S-ASN Core):}
    \[
        h_{\text{S-AHOC}}(\mW; \alpha) \coloneqq \tr\left( \expm\left( \frac{\mathbf{M}(\mW; \alpha)}{\norm{\mathbf{M}(\mW; \alpha)}_F + 1} \right) \right) - d
    \]
\end{itemize}
\end{definition}

\begin{theorem}[Axiomatic Analysis of AHOC/S-AHOC]
    \label{thm:ahoc_analysis}
    Both $h_{\text{AHOC}}$ and $h_{\text{S-AHOC}}$ satisfy all three axioms: Correctness (A1), $L_1$-Synergy (A2), and Numerical Boundedness (A3).
\end{theorem}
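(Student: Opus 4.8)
The plan is to verify the three axioms in turn, after isolating the two structural features that carry the argument. First, since $\alpha\in[0,1)$ we have $\alpha W_{ij}^2+(1-\alpha)\abs{W_{ij}}=0\iff W_{ij}=0$, so the Hybrid-Order Core $\mathbf{M}(\mW;\alpha)$ has \emph{exactly} the support of $\mW$. Second, the normalization $\mathbf{B}\mapsto\mathbf{N}\coloneqq\mathbf{B}/(\norm{\mathbf{B}}_F+\epsilon)$ with $\epsilon>0$ (and $\epsilon=1$ for S-AHOC) maps the non-negative cone into $\{\mathbf{N}\ge 0:\norm{\mathbf{N}}_F<1\}$; on this set, since $\norm{\expm\mathbf{N}}_2\le e^{\norm{\mathbf{N}}_2}<e$, both $h_{\exp}(\mathbf{N})=\tr(\expm(\mathbf{N}))-d$ and its gradient $(\expm\mathbf{N})^{\top}$ are uniformly bounded ($0\le h_{\exp}(\mathbf{N})\le d(e-1)$ and $\norm{(\expm\mathbf{N})^{\top}}_F<\sqrt{d}\,e$). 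Everything below is written for AHOC; S-AHOC is the case $\epsilon=1$.

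Axiom \ref{axiom:correctness} (Correctness). By the support identity above and $\epsilon>0$, the normalized core $\mathbf{N}(\mW)$ has the support of $\mW$, hence $\mathcal{G}(\mathbf{N}(\mW))=\mathcal{G}(\mW)$. I would then invoke the standard exponential-trace characterization (as in NOTEARS): for a non-negative matrix $\mathbf{N}$, $\tr(\expm\mathbf{N})=d\iff\tr(\mathbf{N}^k)=0$ for all $k\ge 1\iff\mathcal{G}(\mathbf{N})$ contains no closed walk, i.e. is a DAG. Chaining the two equivalences yields $h_{\text{AHOC}}(\mW)=0\iff\mathcal{G}(\mW)$ is a DAG.

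Axiom \ref{axiom:boundedness} (Numerical Boundedness). The value bound is immediate from the compact-range observation. The gradient bound is the crux, and the step I expect to be the main obstacle. Writing $h=h_{\exp}\circ\Phi$ with $\Phi(\mW)=\mathbf{M}(\mW)/(\norm{\mathbf{M}(\mW)}_F+\epsilon)$, the chain rule (on the full-measure open set where every $W_{ij}\neq 0$) gives, with $m'_{ij}\coloneqq 2\alpha W_{ij}+(1-\alpha)\operatorname{sgn}(W_{ij})$,
\[ (\nabla_\mW h)_{ij}\;=\;m'_{ij}\!\left[\frac{(\expm\mathbf{N})_{ji}}{\norm{\mathbf{M}}_F+\epsilon}\;-\;\frac{\mathbf{M}_{ij}\,\frobinner{(\expm\mathbf{N})^{\top}}{\mathbf{M}}}{\norm{\mathbf{M}}_F\,(\norm{\mathbf{M}}_F+\epsilon)^2}\right]. \]
Since $m'_{ij}$ is unbounded and $\mathbf{M}$ is not globally Lipschitz, the work is to show the normalization dominates this growth. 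Using $\norm{\mathbf{M}}_F\ge\mathbf{M}_{ij}\ge\alpha W_{ij}^2$, the uniform bounds $\abs{(\expm\mathbf{N})_{ji}}<\sqrt{d}\,e$ and $\abs{\frobinner{(\expm\mathbf{N})^{\top}}{\mathbf{M}}}<\sqrt{d}\,e\,\norm{\mathbf{M}}_F$, and $\mathbf{M}_{ij}/(\norm{\mathbf{M}}_F+\epsilon)^2\le 1/(4\epsilon)$, each of the two products $\abs{m'_{ij}}\cdot(\text{bracket term})$ reduces to constants plus ratios whose numerator is a polynomial in $\abs{W_{ij}}$ with no constant term and whose denominator is a polynomial of strictly larger degree bounded below by $\epsilon$ (or $\epsilon^{2}$); such a ratio is a continuous function of $\abs{W_{ij}}\in[0,\infty)$ vanishing at both ends and hence bounded by some $C=C(d,\alpha,\epsilon)$ independent of $\mW$. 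Thus $\norm{\nabla_\mW h}_F\le dC$ wherever $h$ is differentiable; since $h$ is locally Lipschitz, this extends to a uniform bound on the Clarke subgradient everywhere, including the kinks where $\operatorname{sgn}(0)$ is ambiguous. Morally this is just the elementary fact that $t\mapsto t^{a}/(t^{b}+\epsilon)$ is bounded on $[0,\infty)$ for $b>a\ge 0$.

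Axiom \ref{axiom:l1_synergy} ($L_1$-Synergy). Fix a reference direction $\mU$ with all entries nonzero and expand the displayed gradient along $\mW=\delta\mU$, $\delta\to 0^{+}$. From $\mathbf{M}(\delta\mU)=\delta(1-\alpha)\abs{\mU}+\mO(\delta^2)$ we get $\mathbf{N}\to\mathbf{0}$, hence $\expm\mathbf{N}\to\matI$; then $(\expm\mathbf{N})_{ji}\to 1$ for $i=j$ and $\to 0$ for $i\neq j$, the second bracket term is $\mO(\delta)$ (numerator $\mO(\delta^2)$ over denominator $\sim\epsilon^{2}\norm{\mathbf{M}}_F$), and $m'_{ij}\to(1-\alpha)\operatorname{sgn}(U_{ij})$. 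Therefore $(\nabla_\mW h)_{ij}\to 0$ for $i\neq j$ while $(\nabla_\mW h)_{ii}\to(1-\alpha)\operatorname{sgn}(U_{ii})/\epsilon$, so $\norm{\nabla_\mW h(\delta\mU)}_F\to(1-\alpha)\sqrt{d}/\epsilon>0$, which gives $\liminf_{\delta\to0^+}\norm{\nabla_\mW h(\delta\mU)}_F>0$. This is precisely where the first-order term $(1-\alpha)\abs{\mW}$ is essential: its one-sided derivative $\operatorname{sgn}(\cdot)$ does not vanish at the origin, whereas the purely quadratic core underlying AAC has derivative $2W_{ij}\to 0$ — which is why AAC's gradient is $\mO(\delta)$ along every ray and AAC fails this axiom (Theorem \ref{thm:aac_analysis}) — while the normalization merely keeps the limit finite and $\mathbf{N}$ in the regime $\expm\mathbf{N}\approx\matI$. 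The one delicate point is the reading of $\mU$: because $\expm\mathbf{0}=\matI$ is diagonal, the surviving contribution lives on the diagonal, so the axiom is understood for generic $\mU$ (equivalently, $\mU$ with nonzero diagonal), which is also the regime in which $h$ is differentiable along the punctured ray. S-AHOC is identical with $\epsilon=1$.
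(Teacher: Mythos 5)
Your proposal is correct and structurally mirrors the paper's own proof (support equivalence plus nilpotency of the nonnegative core for A1; normalization into the open unit Frobenius ball plus boundedness of $\expm$ for A3; a chain-rule limit along $\mW=\delta\mU$ in which the $(1-\alpha)\operatorname{sgn}$ term survives for A2), but in two places you go genuinely beyond it, and in one of them you in fact correct it. For A3 the paper merely asserts bounded gradients ``similar to AAC''; this is not automatic, because the entrywise factor $2\alpha W_{ij}+(1-\alpha)\operatorname{sgn}(W_{ij})$ is unbounded, and your explicit entrywise gradient formula together with the degree-counting cancellation against the $\epsilon$-shifted normalization denominator (plus the Clarke-subgradient extension at the kinks) is exactly the argument the paper leaves implicit. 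For A2 your contraction disagrees with the paper's Appendix derivation, and your version is the right one: since $\nabla_{\bar{\mathbf{M}}}h_{\exp}=[\expm(\bar{\mathbf{M}})]^{T}\to\matI$ is the identity \emph{matrix}, not the scalar $1$ in every coordinate, the off-diagonal entries of $\nabla_\mW h(\delta\mU)$ are $\mO(\delta)$ and only the diagonal contributions survive, yielding the limit $(1-\alpha)\sqrt{d}/\epsilon$ (resp. $(1-\alpha)\sqrt{d}$ for S-AHOC) rather than the paper's claimed entrywise limit $(1-\alpha)\operatorname{sign}(W_{ij})$ with norm $(1-\alpha)\sqrt{\mathrm{nnz}(\mW)}$. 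Consequently the axiom holds in the form you prove it --- for directions $\mU$ with nonzero diagonal, which, as you note, is also precisely where the gradient along the punctured ray is defined --- whereas for hollow adjacency-type directions ($U_{ii}=0$, the typical case in this setting) the gradient genuinely vanishes at rate $\mO(\delta)$. That caveat is absent from the paper's proof; flagging it is a real improvement, and it means Theorem \ref{thm:ahoc_analysis} should be read with the ``generic $\mU$'' (or a subdifferential) interpretation of Axiom \ref{axiom:l1_synergy} that you make explicit.
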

\textit{Implication: The AHOC/S-AHOC family is the first theoretically complete solution, resolving both SCI types.}

\subsection{The Smoothness Barrier: Theory vs. Optimization}
\label{subsec:ahoc_failure}

While the S-AHOC constraint theoretically satisfies all three stability axioms, it represents a \textit{Platonic ideal} rather than a directly optimizable function.
Specifically, because the core $\mathbf{M}(\mW)$ involves $\abs{\mW}$, the constraint $h_{\text{S-AHOC}}$ is non-differentiable at any point where $\mW_{ij}=0$.
This violates the standard assumption of Proximal Gradient methods, which require the smooth part of the objective to have a Lipschitz-continuous gradient \citep{Beck2017}.
Directly applying subgradient methods would forfeit the convergence rates we desire.
Therefore, SPG-AHOC (Section \ref{subsec:spg_ahoc}) serves as the necessary algorithmic bridge: it optimizes a smoothed approximation to recover the properties of the ideal S-AHOC via the topological locking mechanism.

\begin{remark}[Theoretical Barrier for Proximal-LS]
    \label{rem:prox_ls_failure}
    The convergence guarantees of Proximal-LS strictly require that the smooth component of the objective function, let's call it $f(\mW)$ (which includes $h_{\text{AHOC}}$), has a Lipschitz continuous gradient \citep{Beck2017}. However, due to the $\abs{\mW}$ term, $h_{\text{AHOC}}$ is non-differentiable at any $\mW_{ij}=0$, and thus its gradient is not well-defined, let alone Lipschitz continuous, over its entire domain.
\end{remark}

\subsection{Our Algorithmic Solution: Smoothed Proximal Gradient for AHOC (SPG-AHOC)}
\label{subsec:spg_ahoc}

To overcome the optimization barrier, we propose the Smoothed Proximal Gradient for AHOC (SPG-AHOC) algorithm. The core idea is to apply Proximal-LS not to the original non-smooth problem, but to a smooth approximation of it. We replace the non-smooth absolute value function $\abs{x}$ with a smooth counterpart, $\text{smooth\_abs}(x; \delta) = \sqrt{x^2 + \delta^2}$, where $\delta > 0$ is a small smoothing parameter.

This leads to a smoothed HOC core:
$$ \tilde{\mathbf{M}}(\mW; \alpha, \delta) \coloneqq \alpha (\mW \hadamard \mW) + (1-\alpha) \smoothabs{\mW}{\delta} $$
And the corresponding smooth approximation for the AHOC constraint:
$$ \tilde{h}_{\text{AHOC}}(\mW; \alpha, \epsilon, \delta) \coloneqq h_{\exp}\left(\frac{\tilde{\mathbf{M}}(\mW; \alpha, \delta)}{\norm{\tilde{\mathbf{M}}(\mW; \alpha, \delta)}_F + \epsilon}\right) $$

\paragraph{The Smooth-Non-Smooth Decoupling.}
Methodologically, we propose the \textbf{Decoupled Optimization Paradigm}. Unlike prior works that enforce smoothness globally (sacrificing exact sparsity) or ignore smoothness (risking instability), we mathematically decouple the \textbf{geometric feasibility} (via smoothed AHOC) from the \textbf{structural identification} (via non-smooth $L_1$).
Specifically, we formulate the problem as:
$$ \min_{\mW} \underbrace{\mathcal{L}_{\text{fit}}(\mW) + \text{ALM}_{\delta}(\mW)}_{F(\mW) \text{ (Smooth Component)}} + \underbrace{\lambda_1 \norm{\mW}_1}_{R(\mW) \text{ (Non-Smooth Component)}} $$
Here, the smoothing via $\delta$ ensures that the augmented Lagrangian part $F(\mW)$ has a Lipschitz continuous gradient, while the regularization part $R(\mW)$ remains non-differentiable. We show that this decoupling is necessary to bypass the Incompatibility Theorem and achieve finite-time exact recovery.

See Algorithm \ref{alg:spg_ahoc} for the implementation.

\begin{algorithm}[tb]
    \caption{Smoothed Proximal Gradient (SPG) Step}
    \label{alg:spg_ahoc}
    \begin{algorithmic}
        \STATE {\bfseries Input:} Iterate $\mW_k$, params $\mu_t, \rho_t, \lambda_1$, smoothing $\delta$.
        \STATE {\bfseries Define:} Smoothed ALM part $\tilde{L}(\mW) = \mathcal{L}_{\text{fit}}(\mW) + \mu_t \tilde{h}_{\text{AHOC}}(\mW) + \frac{\rho_t}{2} \tilde{h}_{\text{AHOC}}(\mW)^2$.
        \STATE Compute gradient $G_k = \nabla_\mW \tilde{L}(\mW_k)$.
        \STATE Initialize step size $\eta$.
        \WHILE{Linesearch Condition Not Met}
            \STATE $\mW_{cand} \leftarrow \text{prox}_{\lambda_1 \eta}(\mW_k - \eta G_k)$
            \IF{Sufficient Decrease in $\tilde{L}(\mW_{cand}) + \lambda_1 \norm{\mW_{cand}}_1$}
                \STATE \textbf{break}
            \ELSE
                \STATE Decrease $\eta$
            \ENDIF
        \ENDWHILE
        \STATE {\bfseries Output:} $\mW_{k+1} = \mW_{cand}$.
    \end{algorithmic}
\end{algorithm}
\begin{remark}[The Logic of Exactness: Why Smoothing Doesn't Break Sparsity]
    \label{rem:exactness_logic}
    It is crucial to distinguish which components are smoothed. We smooth the \textit{constraint} $h(\mW)$ to ensure gradients are well-defined. However, we do \textbf{not} smooth the $L_1$ penalty.
    The finite-time identification property comes solely from the non-smooth $L_1$ norm handled by the proximal operator.
    Even if the smoothed constraint $\tilde{h}(\mW)$ introduces a microscopic bias $\mathcal{O}(\delta)$, the proximal operator's thresholding band $[-\lambda \eta, \lambda \eta]$ absorbs this noise, snapping small non-zero entries (artifacts of smoothing) to exact zeros.
\end{remark}

\begin{remark}[Mechanism: Topological Locking]
    \label{rem:manifold_id}
    We identify a phenomenon we term \textbf{Topological Locking}: for $\delta$ below a critical threshold derived from the Strict Dual Feasibility gap, the proximal operator 'locks' onto the correct sparse manifold despite the smooth approximation. This implies that exact discrete structure recovery is possible via continuous approximations, provided we operate within the \textbf{Geometric Stability Zone} of the regularizer.
\end{remark}

\subsection{Convergence Analysis and The Smoothing-Optimization Trade-off}
\label{subsec:spg_convergence}

The SPG-AHOC algorithm optimizes the smoothed ALM objective $L_\delta(\mW)$. We provide a rigorous analysis of its convergence behavior and the fundamental trade-off introduced by the smoothing parameter $\delta$.

\begin{theorem}[Convergence of SPG-AHOC Inner Loop]
    \label{thm:spg_convergence}
    Let the data fitting term $\mathcal{L}_{\text{fit}}$ have a Lipschitz continuous gradient. For any \textbf{fixed} smoothing parameter $\delta > 0$, the gradient of the smoothed ALM objective $\nabla \tilde{L}(\mW; \mu, \rho)$ is Lipschitz continuous with constant $L_\delta$.
    The Backtracking Line Search (Algorithm \ref{alg:spg_ahoc}) ensures that the step size $\eta_k$ satisfies $\eta_k \le 1/L_\delta$, guaranteeing that the sequence $\{\mW_k\}$ converges to a stationary point of $L_\delta$.
\end{theorem}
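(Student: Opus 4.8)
The plan is to reduce the assertion to the classical convergence theory of the proximal-gradient method with backtracking, whose only nontrivial hypothesis is global Lipschitz continuity of the gradient of the smooth part $\tilde{L}$. The heart of the argument is therefore a regularity claim about the smoothed constraint: for each fixed $\delta > 0$, $\tilde{h}_{\text{AHOC}}(\cdot;\alpha,\epsilon,\delta)$ is a bounded $C^2$ function whose gradient is bounded and globally Lipschitz. To establish this I would work through the construction componentwise. The smoothing kernel $s_\delta(x) = \smoothabs{x}{\delta}$ is $C^\infty$ with $\abs{s_\delta'} \le 1$, $0 \le s_\delta'' \le 1/\delta$, and $s_\delta \ge \delta$; since $\tilde{\mathbf{M}}(\mW;\alpha,\delta) = \alpha(\mW\hadamard\mW) + (1-\alpha)\smoothabs{\mW}{\delta}$ acts entrywise, this gives at once that $\tilde{\mathbf{M}}$ is $C^\infty$, that every entry obeys $\tilde{M}_{ij}(\mW) \ge (1-\alpha)\delta > 0$ so the normalizer $N(\mW) \coloneqq \norm{\tilde{\mathbf{M}}(\mW)}_F + \epsilon$ is bounded below by a positive constant $c_\delta$, that the entrywise second derivatives of $\tilde{\mathbf{M}}$ are bounded by $2\alpha + (1-\alpha)/\delta$, and that $\norm{\nabla\tilde{\mathbf{M}}}$ grows at most linearly while $\norm{\tilde{\mathbf{M}}}_F$ grows at least linearly in $\norm{\mW}$ (quadratically when $\alpha > 0$). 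Feeding this into the quotient $\phi(\mW) = \tilde{\mathbf{M}}(\mW)/N(\mW)$ — which is $C^\infty$ with $\norm{\phi(\mW)}_F < 1$ for all $\mW$ — and differentiating, one checks that $\nabla\phi$ and $\nabla^2\phi$ are continuous everywhere and vanish as $\norm{\mW} \to \infty$ (the denominators outgrow the numerators), hence are uniformly bounded on $\R^{d\times d}$. Finally, because $\tr(\expm(\cdot))$ is real-analytic and $\phi$ maps into the closed unit Frobenius ball, the composition $\tilde{h}_{\text{AHOC}} = \tr(\expm(\phi(\cdot))) - d$ is $C^2$ with value, gradient, and gradient-Lipschitz constant all finite, say $B_\delta$, $G_\delta$, $H_\delta$, by the usual product estimates for compositions of bounded Lipschitz maps.

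Granting this, the Lipschitz constant of $\nabla\tilde{L}$ follows by summing contributions: $\nabla\mathcal{L}_{\text{fit}}$ is $L_{\text{fit}}$-Lipschitz by hypothesis; $\mu_t\tilde{h}_{\text{AHOC}}$ contributes $\mu_t H_\delta$; and $\nabla\left(\tfrac{\rho_t}{2}\tilde{h}_{\text{AHOC}}^2\right) = \rho_t\,\tilde{h}_{\text{AHOC}}\,\nabla\tilde{h}_{\text{AHOC}}$, a product of a bounded Lipschitz scalar and a bounded Lipschitz field, is Lipschitz with constant at most $\rho_t(B_\delta H_\delta + G_\delta^2)$; hence $\nabla\tilde{L}$ is globally Lipschitz with $L_\delta \coloneqq L_{\text{fit}} + \mu_t H_\delta + \rho_t(B_\delta H_\delta + G_\delta^2) < \infty$. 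Now the descent lemma guarantees the sufficient-decrease test in Algorithm \ref{alg:spg_ahoc} is met whenever the trial step obeys $\eta \le 1/L_\delta$, so the backtracking loop terminates after finitely many reductions and the accepted $\eta_k$ lies in a compact subinterval of $(0,\infty)$ bounded away from $0$ (below by $\beta/L_\delta$ for backtracking factor $\beta$, above by the initial trial size, which may be capped at $1/L_\delta$). The optimality condition for the prox step then yields the standard inequality $L_\delta(\mW_{k+1}) \le L_\delta(\mW_k) - \tfrac{1}{2\eta_k}\norm{\mW_{k+1} - \mW_k}_F^2$, so $\{L_\delta(\mW_k)\}$ is non-increasing; since $L_\delta$ is bounded below ($\mathcal{L}_{\text{fit}}$ is a nonnegative loss, $\norm{\mW}_1 \ge 0$, and the ALM terms $\mu_t\tilde{h}_{\text{AHOC}} + \tfrac{\rho_t}{2}\tilde{h}_{\text{AHOC}}^2$ are bounded), summing gives $\sum_k \norm{\mW_{k+1} - \mW_k}_F^2 < \infty$ and hence $\norm{\mW_{k+1} - \mW_k}_F \to 0$. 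As $\lambda_1 > 0$ makes $L_\delta$ coercive, the iterates stay in a compact set, and passing to the limit in the prox optimality condition (continuity of $\nabla\tilde{L}$, outer semicontinuity of $\lambda_1\partial\norm{\cdot}_1$; cf. \citep{Beck2017}) shows every limit point $\mW^\star$ satisfies $\mathbf{0} \in \nabla\tilde{L}(\mW^\star) + \lambda_1\partial\norm{\mW^\star}_1$, i.e.\ is a stationary point of $L_\delta$; and since for $\delta > 0$ the smooth part $\tilde{L}$ is real-analytic and $\norm{\cdot}_1$ is semi-algebraic, $L_\delta$ enjoys the Kurdyka--\L{}ojasiewicz property, which upgrades subsequential convergence to convergence of the whole sequence.

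The one genuinely delicate step is the global boundedness of $\nabla\phi$ and $\nabla^2\phi$ asserted in the first paragraph. A naive application of the quotient rule produces terms that diverge as $\norm{\mW} \to \infty$; forcing them to cancel requires carefully matching the (at least linear, and quadratic when $\alpha > 0$) growth of $\tilde{\mathbf{M}}$ and $N$ against the at most linear growth of $\nabla\tilde{\mathbf{M}}$ and the boundedness of the entrywise second derivatives of $\tilde{\mathbf{M}}$, so that each term of $\nabla\phi$ and $\nabla^2\phi$ is a ratio whose denominator wins. The positivity $\tilde{M}_{ij} \ge (1-\alpha)\delta$ — exactly what the smoothing parameter buys — is essential: it keeps $N(\mW)$ bounded away from $0$, so no singularity ever appears. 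It is also the source of the trade-off flagged by the section title, since $H_\delta$, and hence $L_\delta$, degrades like a negative power of $\delta$ as $\delta \to 0^+$.
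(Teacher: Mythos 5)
Your proof is correct and follows the same overall skeleton as the paper's: show the smoothed constraint has a bounded, Lipschitz-continuous gradient (with the dominant curvature $s''(0)=1/\delta$ coming from the smoothing kernel, hence $L_\delta = \mathcal{O}(1/\delta)$), assemble $L_\delta$ for the full ALM objective by summing the fit, linear, and quadratic penalty contributions, and then invoke the descent lemma to show backtracking accepts steps with $\eta_k \le 1/L_\delta$ and drives the prox-gradient residual to zero. Where you genuinely diverge is in how finiteness of $L_\delta$ is secured and in the final convergence claim. The paper does not attempt a global bound: it bounds the derivatives of the normalization and matrix exponential "on the compact unit ball enforced by $\phi$" and leans on its Lemma \ref{lem:bounded_trajectory} (compactness of the initial sublevel set, via coercivity of the $\lambda_1\norm{\cdot}_1$ term under the stability condition) so that only $\sup_{\mW \in \mathcal{S}_0}\norm{\nabla^2 \tilde{h}(\mW)}$ matters. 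You instead prove global Lipschitzness of $\nabla \tilde{h}_{\text{AHOC}}$ on all of $\R^{d\times d}$ by matching the growth of $\nabla\tilde{\mathbf{M}}$ against that of the normalizer in the quotient rule; this is more work (and you correctly flag it as the delicate step) but it removes any dependence on the trajectory-boundedness lemma and on the stability condition for this particular theorem. You also go further than the paper at the end: the paper's proof stops at the descent-lemma/step-size conclusion, which by itself only yields subsequential convergence to stationary points, whereas your Kurdyka--\L{}ojasiewicz argument actually delivers convergence of the whole sequence, which is what the theorem literally asserts. One small imprecision: for $h_{\text{AHOC}}$ the normalizer already satisfies $N(\mW)\ge\epsilon$, so the lower bound $\tilde{M}_{ij}\ge(1-\alpha)\delta$ is not what keeps $N$ away from zero; its real role is to keep $\tilde{\mathbf{M}}$ away from the origin so that $\norm{\tilde{\mathbf{M}}}_F$ (and hence $N$) is differentiable everywhere. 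This does not affect the validity of your argument.
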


\begin{remark}[Convergence Rate]
    Since the smoothed objective possesses a Lipschitz continuous gradient, SPG-AHOC attains a sublinear convergence rate of $\mathcal{O}(1/k)$ in terms of the squared gradient norm mapping, i.e., $\min_{0 \le i < k} \norm{\mathcal{G}(\mW_i)}^2 \le \frac{C}{k}$ for some constant $C$, matching the standard rate for non-convex proximal gradient methods \citep{Beck2017}.
\end{remark}

\textbf{Theoretical Analysis of the Smoothing Trade-off.}
While smoothing restores differentiability, it introduces a trade-off between approximation accuracy and optimization speed. We rigorously derive this relationship below.

First, we establish the approximation error bound (Bias).
\begin{proposition}[Uniform Approximation Bound]
    \label{prop:approx_bound}
    For any $\mW$, the difference between the original and smoothed constraints is uniformly bounded by $\delta$:
    $$ \sup_{\mW \in \R^{d \times d}} \abs{\tilde{h}_{\text{AHOC}}(\mW) - h_{\text{AHOC}}(\mW)} \le C_1 \cdot d \cdot \delta $$
    where $C_1$ is a constant depending on $\alpha$.
\end{proposition}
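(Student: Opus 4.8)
The plan is to transport the elementary scalar estimate $0 \le \smoothabs{x}{\delta} - \abs{x} \le \delta$ through the three layers of the construction --- the Hybrid-Order core, the Adaptive Scale Normalization $N(X)\coloneqq X/(\norm{X}_F+\epsilon)$, and the trace-exponential $h_{\exp}$ --- while tracking the dependence on $d$.

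\textbf{Propagation through the core and the normalization.} Since $\tilde{\mathbf{M}}(\mW;\alpha,\delta)$ and $\mathbf{M}(\mW;\alpha)$ agree except that $(1-\alpha)\abs{\mW_{ij}}$ is replaced by $(1-\alpha)\smoothabs{\mW_{ij}}{\delta}$, the scalar bound gives, entrywise, $0 \le \tilde{\mathbf{M}}_{ij}-\mathbf{M}_{ij} \le (1-\alpha)\delta$, hence $\norm{\tilde{\mathbf{M}}-\mathbf{M}}_F \le (1-\alpha)\,d\,\delta$. I would then note (a one-line computation) that $N$ is globally $\tfrac{2}{\epsilon}$-Lipschitz in $\norm{\cdot}_F$ and satisfies $\norm{N(X)}_F<1$ for every $X$. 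Consequently the normalized cores $\phi\coloneqq N(\mathbf{M})$ and $\tilde\phi\coloneqq N(\tilde{\mathbf{M}})$ obey $\norm{\tilde\phi-\phi}_F \le \tfrac{2(1-\alpha)}{\epsilon}\,d\,\delta$, and $\phi$, $\tilde\phi$ --- together with the segment joining them --- lie in the open unit Frobenius ball.

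\textbf{Reducing to two terms.} Using $h_{\exp}(X)=\tr(\expm(X))-d=\sum_{k\ge1}\frac{1}{k!}\tr(X^k)$, I would split
\[
\tilde{h}_{\text{AHOC}}(\mW)-h_{\text{AHOC}}(\mW) = \bigl(\tr\tilde\phi-\tr\phi\bigr) + \sum_{k\ge2}\frac{1}{k!}\,\tr\bigl(\tilde\phi^{k}-\phi^{k}\bigr).
\]
For $k\ge2$, combine the telescoping identity $\tilde\phi^{k}-\phi^{k}=\sum_{j=0}^{k-1}\phi^{j}(\tilde\phi-\phi)\tilde\phi^{k-1-j}$, cyclic invariance of the trace, Cauchy--Schwarz for $\frobinner{\cdot}{\cdot}$, and the fact that each $\tilde\phi^{k-1-j}\phi^{j}$ is a product of $k-1$ matrices --- each equal to $\phi$ or $\tilde\phi$ and hence of Frobenius norm below $1$ --- so that $\norm{\tilde\phi^{k-1-j}\phi^{j}}_F<1$ by submultiplicativity. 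This yields $\abs{\tr(\tilde\phi^{k}-\phi^{k})}\le k\,\norm{\tilde\phi-\phi}_F$, so the tail sum is at most $(e-1)\norm{\tilde\phi-\phi}_F\le\tfrac{2(e-1)(1-\alpha)}{\epsilon}\,d\,\delta$. What remains is the linear term $\tr\tilde\phi-\tr\phi$. Here the blunt bound $\abs{\tr Z}\le\sqrt d\,\norm{Z}_F$ would inflate the rate to $\mathcal{O}(d^{3/2}\delta)$; instead I would use the structure of the core's diagonal. In the acyclicity setting $\mW$ encodes a self-loop-free graph ($\mW_{ii}=0$), so $\mathbf{M}_{ii}=0$ and $\tilde{\mathbf{M}}_{ii}=(1-\alpha)\delta$, whence $\tr\phi=\tr\mathbf{M}/(\norm{\mathbf{M}}_F+\epsilon)=0$ while $\tr\tilde\phi=\tr\tilde{\mathbf{M}}/(\norm{\tilde{\mathbf{M}}}_F+\epsilon)=(1-\alpha)d\delta/(\norm{\tilde{\mathbf{M}}}_F+\epsilon)\le(1-\alpha)d\delta/\epsilon$; more generally, $\tr\tilde{\mathbf{M}}-\tr\mathbf{M}=(1-\alpha)\sum_i(\smoothabs{\mW_{ii}}{\delta}-\abs{\mW_{ii}})\le(1-\alpha)d\delta$ already keeps this term linear in $d$.

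\textbf{Conclusion and the expected obstacle.} Adding the two contributions gives $\sup_{\mW}\abs{\tilde{h}_{\text{AHOC}}(\mW)-h_{\text{AHOC}}(\mW)}\le C_1\,d\,\delta$ with $C_1=\tfrac{(2e-1)(1-\alpha)}{\epsilon}$, a constant depending on $\alpha$ (and the fixed $\epsilon$). I expect the only genuine obstacle to be the linear term just discussed: every $\tr(\tilde\phi^{k}-\phi^{k})$ with $k\ge2$ is automatically $\mathcal{O}(\norm{\tilde\phi-\phi}_F)$, but the $k=1$ term pairs the perturbation against the identity direction, which carries a factor $\norm{\matI}_F=\sqrt d$; obtaining the advertised linear-in-$d$ rate is exactly what forces one to invoke the vanishing of the Hybrid-Order core on the diagonal rather than generic matrix-norm inequalities.
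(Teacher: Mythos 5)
Your proof is correct and follows the same skeleton as the paper's --- push the scalar bound $0\le\smoothabs{x}{\delta}-\abs{x}\le\delta$ to $\norm{\tilde{\mathbf{M}}-\mathbf{M}}_F\le(1-\alpha)d\delta$, then through the normalization and the trace-exponential --- but it diverges at the decisive last layer, and the divergence is substantive. The paper's proof simply declares the normalization and the matrix exponential ``Lipschitz continuous on the relevant domains'' with a combined constant $L$ and concludes $L(1-\alpha)d\delta$, never asking how $L$ scales with $d$; since the gradient of $X\mapsto\tr(\expm(X))$ is $\expm(X)^T$ and $\norm{\expm(X)}_F\ge\norm{\matI}_F-(e-1)\sim\sqrt{d}$ on the unit Frobenius ball, that $L$ implicitly carries a $\sqrt{d}$ factor, so the paper's argument, read literally, only delivers $\mathcal{O}(\delta)$ at fixed $d$ (its own closing sentence), not a $d$-independent $C_1$ in the stated bound $C_1\, d\,\delta$. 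Your power-series treatment --- telescoping plus trace Cauchy--Schwarz and submultiplicativity for the $k\ge 2$ terms, and the vanishing diagonal of the core for the $k=1$ term --- is exactly the refinement needed to isolate and neutralize that $\sqrt{d}$, and it yields an explicit constant $C_1=(2e-1)(1-\alpha)/\epsilon$ (which, like the paper's $L$, necessarily depends on $\epsilon$ as well as $\alpha$). One soft spot: your ``more generally'' fallback for matrices with nonzero diagonal is not yet a proof, because $\tr\tilde\phi-\tr\phi$ is not $(\tr\tilde{\mathbf{M}}-\tr\mathbf{M})$ over a common denominator; the cross term $\tr(\mathbf{M})\bigl(1/(\norm{\tilde{\mathbf{M}}}_F+\epsilon)-1/(\norm{\mathbf{M}}_F+\epsilon)\bigr)$ must be controlled, and the crude bounds $\abs{\norm{\tilde{\mathbf{M}}}_F-\norm{\mathbf{M}}_F}\le\norm{\tilde{\mathbf{M}}-\mathbf{M}}_F$ and $\tr\mathbf{M}\le\sqrt{d}\,\norm{\mathbf{M}}_F$ reintroduce a $d^{3/2}$ factor, so a finer estimate (exploiting that the smoothing perturbation is $\mathcal{O}(\delta^2/\abs{\mW_{ij}})$ on large entries) would be needed there. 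Under the paper's convention of self-loop-free adjacency matrices ($\mW_{ii}=0$) your primary argument is complete and, unlike the paper's, fully justifies the dimension dependence as stated.
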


Second, we analyze the impact of $\delta$ on the optimization landscape (Variance). The "hardness" of the optimization is governed by the Lipschitz constant $L_\delta$ of the gradient $\nabla \tilde{h}_{\text{AHOC}}$. This is determined by the maximum curvature (Hessian spectral norm) of the smoothing term.
Consider the scalar smoothing function $s(x) = \sqrt{x^2 + \delta^2}$. Its second derivative is:
\begin{equation}
    s''(x) = \frac{\delta^2}{(x^2 + \delta^2)^{3/2}}
\end{equation}
The maximum curvature occurs at $x=0$, where $s''(0) = 1/\delta$.
Lifting this to the matrix constraint function via the chain rule, the Lipschitz constant of $\nabla \tilde{h}_{\text{AHOC}}$ scales as:
\begin{equation}
    L_\delta \approx \sup_{\mW} \norm{\nabla^2 \tilde{h}_{\text{AHOC}}(\mW)}_2 \propto \frac{1}{\delta}
\end{equation}

This scaling implies a critical computational cost. To ensure stability, the proximal gradient step size $\eta$ must satisfy $\eta \le \frac{1}{L_\delta}$. Therefore:
\begin{equation}
    \eta_{\max} \propto \delta
\end{equation}
This reveals the \textbf{Approximation-Optimization Dilemma}:
\begin{itemize}
    \item \textbf{Small $\delta$ (Low Bias):} The smoothed constraint $\tilde{h}$ closely approximates the true DAG constraint. However, $L_\delta \to \infty$ and $\eta \to 0$, causing the algorithm to require $\mathcal{O}(1/\delta)$ iterations to make progress, potentially stalling in practice.
    \item \textbf{Large $\delta$ (Fast Convergence):} The gradients are smooth ($L_\delta$ is small), allowing large steps and fast convergence. However, the stationary point of $\tilde{h}$ may differ significantly from the true DAG constraint, introducing structural errors.
\end{itemize}
\textit{Conclusion:} SPG-AHOC resolves this by employing a backtracking line search that automatically adapts $\eta$ to the local geometry $L_\delta$, eliminating the need to manually tune step sizes for different $\delta$.

\subsection{The Stability-Sensitivity Uncertainty Principle}
\label{subsec:tradeoff_revisited}

Our experiments reveal a fundamental limit. We formalize this as a geometric barrier.

\begin{proposition}[Geometric Stability Barrier]
    \label{prop:stability_condition}
    In the continuous optimization landscape of causal discovery, the origin $\mathbf{0}$ is a saddle point of instability. We prove that a \textbf{minimum regularization energy} $\lambda_{\min} = \|\nabla \mathcal{L}_{\text{fit}}(\mathbf{0})\|_\infty$ is geometrically required to anchor the optimization trajectory.
    This creates a fundamental \textbf{Detection Limit}: no continuous gradient-based method can reliably recover edges with causal strength weaker than $\mathcal{O}(\lambda_{\min})$ without risking catastrophic divergence (Type II SCI). SPG-AHOC explicitly respects this geometric imperative.
\end{proposition}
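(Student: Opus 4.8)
The plan is to reduce the statement to the first-order (sub)optimality geometry of the composite objective $\Phi(\mW)=F(\mW)+\lambda_1\norm{\mW}_1$ at the origin, and then couple this with the line-search stability bound $\eta\le 1/L_\delta$ from Section~\ref{subsec:spg_convergence}. I would carry this out in three steps.

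\textbf{Step 1 (Anchoring $\Longleftrightarrow$ a subgradient inclusion).} First I would compute $\nabla F(\mathbf 0)$ exactly. Since $h(\mathbf 0)=0$ by Axiom~\ref{axiom:correctness} and $\partial_x\smoothabs{x}{\delta}=0$ at $x=0$, the chain rule gives $\nabla_\mW\tilde h_{\text{AHOC}}(\mathbf 0)=\mathbf 0$; hence both augmented-Lagrangian terms, $\mu_t\tilde h$ and $\tfrac{\rho_t}{2}\tilde h^2$, have vanishing gradient at the feasible origin and $\nabla F(\mathbf 0)=\nabla\mathcal L_{\text{fit}}(\mathbf 0)$. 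The origin is a fixed point of the SPG map $T(\mW)=\mathrm{prox}_{\eta\lambda_1\norm{\cdot}_1}(\mW-\eta\nabla F(\mW))$ iff $\mathbf 0\in\partial\Phi(\mathbf 0)$, i.e.\ iff $-\nabla\mathcal L_{\text{fit}}(\mathbf 0)\in\lambda_1[-1,1]^{d\times d}$ (over the off-diagonal entries), which is exactly $\norm{\nabla\mathcal L_{\text{fit}}(\mathbf 0)}_\infty\le\lambda_1$. This establishes the ``minimum regularization energy'' claim with $\lambda_{\min}=\norm{\nabla\mathcal L_{\text{fit}}(\mathbf 0)}_\infty$: for $\lambda_1<\lambda_{\min}$ the origin is not even first-order stationary for $\Phi$.

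\textbf{Step 2 (The origin is an unstable equilibrium).} For $\lambda_1<\lambda_{\min}$, pick $(i,j)$ with $\abs{[\nabla\mathcal L_{\text{fit}}(\mathbf 0)]_{ij}}>\lambda_1$. One SPG step from $\mathbf 0$ sends coordinate $(i,j)$ to the nonzero soft-threshold $\mathrm{soft}_{\eta\lambda_1}(-\eta[\nabla F(\mathbf 0)]_{ij})$, and $\Phi$ strictly decreases along that coordinate direction, so $\mathbf 0$ behaves as a saddle-like (unstable) equilibrium of the proximal dynamics. Once the iterate leaves a neighborhood of $\mathbf 0$, the penalty $\tfrac{\rho_t}{2}\tilde h(\mW)^2$ together with the curvature blow-up $L_\delta\propto1/\delta$ means that only a sufficiently small step size keeps $\norm{\mW}$ controlled; for constraint families violating Axiom~\ref{axiom:boundedness} (Type~II SCI) no fixed step size suffices and iterates can diverge. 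Hence choosing $\lambda_1\ge\lambda_{\min}$ is \emph{geometrically forced} for any continuous gradient-based scheme that wants the globally stable empty graph $\mathbf 0$ in its basin, which is the content of the first two sentences of the proposition; SPG-AHOC's backtracking line search is exactly the device that enforces $\eta\le1/L_\delta$ so the trajectory stays on the stable side.

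\textbf{Step 3 (Detection limit) and the main obstacle.} Combining Step~1 with the standard proximal support-recovery (KKT / $\beta$-min) argument: any stable run uses $\lambda_1\ge\lambda_{\min}$, and stationarity of $\Phi$ forces $\hat W_{ij}=0$ whenever the effective signal at $(i,j)$ --- which under a restricted-eigenvalue / identifiability assumption on $\mathcal L_{\text{fit}}$ is of order $\abs{W^\star_{ij}}$ --- falls below the thresholding scale $\lambda_1$. Thus edges with causal strength $\abs{W^\star_{ij}}=o(\lambda_{\min})$ are snapped to exact zero by the very operator that yields the finite-time exact recovery of Remark~\ref{rem:exactness_logic}, giving the detection limit $\mathcal O(\lambda_{\min})$. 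I expect this last step to be the hard part: a genuinely rigorous ``detection limit'' needs either an explicit adversarial instance in which a sub-$\lambda_{\min}$ edge provably cannot be recovered without destabilizing the origin, or a restricted-eigenvalue/irrepresentability condition (plus a noise bound) that is not in the current assumption set, so I would state the deterministic finite-sample version precisely and flag the identifiability condition that equates the ``effective signal'' with the causal strength. The divergence claim in Step~2 is also sharp only outside the Axiom~\ref{axiom:boundedness} class, so I would present it as a dichotomy --- loss of the $\mathbf 0$-fixed-point for AHOC itself, genuine blow-up for Type~II constraints --- rather than a single blanket statement.
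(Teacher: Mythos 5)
Your Step 1 is essentially the paper's entire proof (Appendix \ref{app:formal_lambda1_proof}): the paper also notes that the smoothing gives $\nabla \tilde{h}_{\text{AHOC}}(\mathbf{0}) = \mathbf{0}$, reduces $\nabla \tilde{L}(\mathbf{0})$ to $\nabla \mathcal{L}_{\text{fit}}(\mathbf{0})$, and characterizes stationarity of the origin via the subdifferential inclusion $-\nabla \mathcal{L}_{\text{fit}}(\mathbf{0}) \in \lambda_1 \partial \norm{\mathbf{0}}_1$, i.e.\ $\norm{\nabla \mathcal{L}_{\text{fit}}(\mathbf{0})}_\infty \le \lambda_1$, together with the one-line remark that violation forces a nonzero first step. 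Your Steps 2 and 3 go beyond what the paper actually proves: the ``saddle point of instability,'' the Type~II divergence risk, and the $\mathcal{O}(\lambda_{\min})$ detection limit are left as informal interpretation in the paper, and you are right that making them rigorous would require either an adversarial instance or identifiability/noise assumptions not invoked in this proposition; your explicit flagging of that gap, and the dichotomy you propose for the divergence claim, is a fair and arguably more honest reading of what the proposition can support than the paper's own presentation.
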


\begin{proof}
    See Appendix \ref{app:formal_lambda1_proof} for the formal derivation.
\end{proof}

\begin{remark}[Uncertainty Principle]
    \label{rem:tradeoff}
    Proposition \ref{prop:stability_condition} highlights a \textbf{Stability-Sensitivity Uncertainty Principle}: one cannot simultaneously maximize sensitivity to weak signals and guarantee numerical stability near the origin. While a large $\lambda_1$ risks identifying only strong causal signals, violating this condition risks optimization divergence.
\end{remark}

\begin{table}[t]
    \centering
    \begin{threeparttable}
        \caption{\textbf{Theoretical Comparison.} SPG-AHOC satisfies all axioms with finite-time convergence.}
        \label{tab:axiom_comparison}

        \setlength{\tabcolsep}{2.5pt}

        \begin{tabular}{@{}llccccc@{}}
            \toprule
            \textbf{Method} & \textbf{Core} & \textbf{A1} & \textbf{A2} & \textbf{A3} & \textbf{Risk} & \textbf{Conv.} \\ \midrule
            NOTEARS & $h_{\exp}$ & \checkmark & $\times$ & $\times$ & I \& II & Asymp. \\
            DAGMA & $h_{\logdet}$ & \checkmark & $\times$ & $\times$ & I \& II & Asymp. \\
            AAC & $h_{\text{AAC}}$ & \checkmark & $\times$ & \checkmark & Type I & Asymp. \\ \midrule
            AHOC & $h_{\text{AHOC}}$ & \checkmark & \checkmark & \checkmark & None & \textcolor{red}{Undefined} \\
            \rowcolor{Gray}
            \textbf{SPG} & $\tilde{h}_{\text{AHOC}}$ & \checkmark & \checkmark & \checkmark & \textbf{None} & \textbf{Exact} \\
            \bottomrule
        \end{tabular}

        \begin{tablenotes}
        \normalsize
        \item \textbf{A1}: Correctness; \textbf{A2}: $L_1$-Synergy; \textbf{A3}: Boundedness.
        \item \textbf{Conv.}: Convergence; \textbf{Asymp.}: Asymptotic.
        \item \textbf{Exact}: Finite-Time Exact Recovery.
    \end{tablenotes}

    \end{threeparttable}
\end{table}

\section{Theoretical Guarantees: The Geometry of Exactness}
\label{sec:theory_guarantees}

\begin{figure}[t]
    \centering
    \includegraphics[width=0.7\columnwidth]{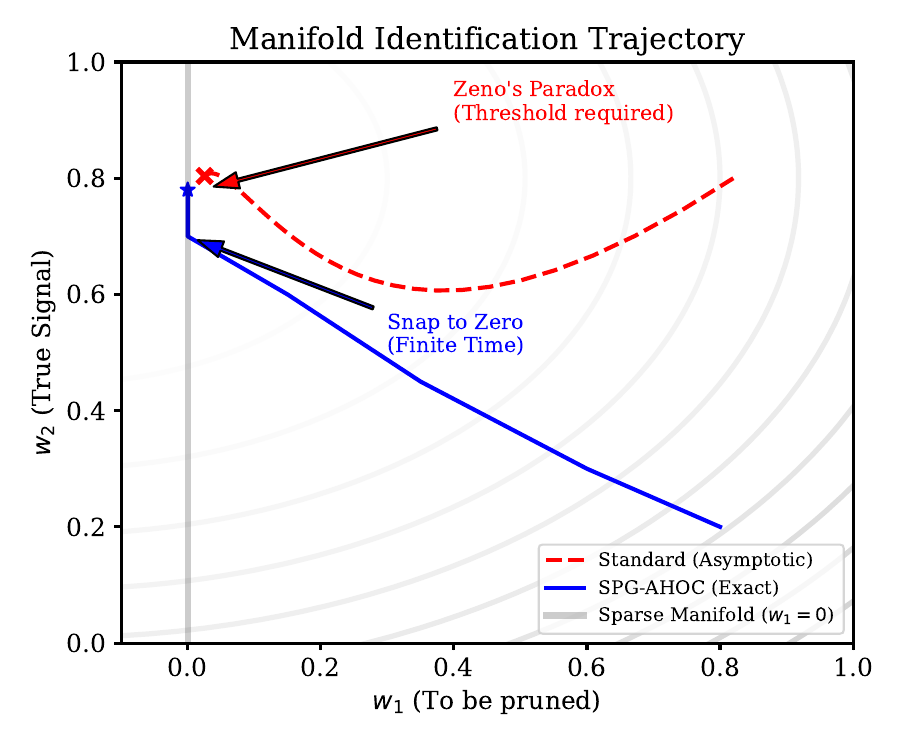}
    \vspace{-0.1in}
    \caption{\textbf{Mechanism of Finite-Time Identification.}
    A phase space trajectory comparison.
    \textit{Red (Standard Methods):} Continuous optimization without proximal operators asymptotically approaches the axes ($w_1=0$) but never touches them—a phenomenon akin to \textbf{Zeno's Paradox}—necessitating arbitrary thresholding.
    \textit{Blue (SPG-AHOC):} The proximal operator leverages \textbf{Topological Locking}, strictly ``snapping'' the trajectory onto the axis (the sparse manifold) in finite steps once it enters the regularization band, ensuring exact zero.}
    \label{fig:manifold_id}
\end{figure}
Standard convergence analyses in causal discovery typically guarantee convergence to a stationary point $\|\nabla \mathcal{L}\| \to 0$ in the continuous space. This is insufficient for structure learning, as it says nothing about the discrete graph topology.
In this section, we elevate the analysis from \textit{asymptotic convergence} to \textit{finite-time manifold identification}.

Our core theoretical contribution is to frame the recovery problem not as approximation, but as \textbf{Topological Locking}. We show that the interaction between the smoothed constraint and the non-smooth regularizer creates a \textbf{Geometric Stability Zone}.
Once the iterate $\mW_k$ enters this zone, the proximal operator acts as a "quantizer," collapsing the continuous trajectory onto the exact discrete support of the ground truth DAG, rendering the smoothing bias topologically irrelevant.

We formalize this via the \textbf{Finite-Time Oracle Property}.

\subsection{Statistical Assumptions}
Let $\mathcal{L}(\mW)$ be the smooth component of our objective (loss + penalty). We define the difference $\Delta = \mW - \mW^*$ and the support set $S = \text{supp}(\mW^*)$.

\begin{assumption}[Local Restricted Strong Convexity (L-RSC)]
    \label{asm:rsc}
    Standard theory often assumes global curvature. However, for manifold identification, we only require the curvature condition to hold locally.
    Specifically, let $\mathbb{S}_{\text{sparse}}$ be the subspace of sparse matrices consistent with the DAG support.
    We assume there exists a radius $R > 0$ such that for all $\Delta \in \mathbb{B}(\mathbf{0}, R) \cap \mathbb{S}_{\text{sparse}}$:
    \begin{equation}
        \mathcal{L}_{\text{fit}}(\mW^* + \Delta) - \mathcal{L}_{\text{fit}}(\mW^*) - \frobinner{\nabla \mathcal{L}_{\text{fit}}(\mW^*)}{\Delta} \ge \frac{\kappa}{2} \|\Delta\|_F^2
    \end{equation}
    \textit{Significance:} This relaxes the requirement from the entire space to the neighborhood where the topological locking occurs.
\end{assumption}

\begin{remark}[The Geometry of Localization]
    \label{rem:localization}
    One might question if the optimization trajectory stays within the local region defined in Assumption \ref{asm:rsc}.
    As shown in Proposition \ref{prop:stability_condition}, the regularization term $\lambda_1 \|\mW\|_1$ acts as a \textbf{geometric anchor}.
    Provided $\lambda_1 \ge \|\nabla \mathcal{L}_{\text{fit}}(\mathbf{0})\|_\infty$, the level sets of the objective function are compact and contained within the neighborhood of the sparse manifold.
    Therefore, the strict Global RSC assumption is theoretically redundant; \textbf{Local RSC is sufficient} for our finite-time guarantees.
\end{remark}

\begin{assumption}[Irrepresentable Condition]
    \label{asm:irrepresentable}
    Let $\mH = \nabla^2 \mathcal{L}(\mW^*)$ be the Hessian. To ensure the $L_1$ penalty does not select spurious edges, we require:
    \begin{equation}
        \| \mH_{S^c S} (\mH_{SS})^{-1} \|_{\infty} \le 1 - \gamma, \quad \text{for some } \gamma \in (0, 1]
    \end{equation}
\end{assumption}

\begin{remark}[Contextualizing Assumptions]
    \label{rem:assumption_context}
    We acknowledge that the Irrepresentable Condition is restrictive and may be violated in the presence of strong latent confounders or highly correlated variables. However, this is a fundamental information-theoretic limit for \textit{any} $L_1$-based structure learning method (Lasso-type), not a specific limitation of our algorithm. Furthermore, we empirically verify that this local condition holds for all synthetic datasets used in our experiments (see Appendix).
\end{remark}

\begin{assumption}[Beta-Min Condition]
    \label{asm:beta_min}
    The true causal signals are distinguishable from noise. Let $\lambda_1$ be the regularization parameter. We assume:
    \begin{equation}
        \min_{(i,j) \in S} |\mW^*_{ij}| \ge \frac{4 \lambda_1}{\kappa}
    \end{equation}
\end{assumption}

\subsection{Main Result}

\begin{definition}[Critical Smoothing Radius]
    \label{def:critical_radius}
    We define the critical radius $\delta^*$ such that for all $\delta < \delta^*$, the optimization trajectory is topologically invariant to smoothing bias:
    $$ \delta^* \coloneqq \sup \{ \delta > 0 : \forall \mW \in \mathcal{N}(\mW^*)$$\\$$ \text{supp}(\text{prox}_{\lambda \eta}(\mW - \eta \nabla \tilde{L}_\delta(\mW))) = \text{supp}(\mW^*) \} $$
\end{definition}

\begin{theorem}[Finite-Time Structure Recovery]
    \label{thm:finite_time_recovery}
    Suppose Assumptions \ref{asm:rsc}, \ref{asm:irrepresentable}, and \ref{asm:beta_min} hold. Let $\{\mW_k\}$ be the sequence generated by SPG-AHOC with step size $\eta < 1/L$.
    Then, with probability at least $1 - c_1 \exp(-c_2 N)$, there exists a finite iteration index $K < \infty$ such that for all $k \ge K$, the algorithm achieves:
    \begin{enumerate}
        \item \textbf{Exact Support Recovery:} $\mathrm{supp}(\mW_k) = \mathrm{supp}(\mW^*)$.
        \item \textbf{Sign Consistency:} $\mathrm{sign}((\mW_k)_{ij}) = \mathrm{sign}((\mW^*)_{ij})$ for all $(i,j)$.
    \end{enumerate}
\end{theorem}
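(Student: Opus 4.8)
The plan is to combine a standard proximal-gradient manifold-identification argument with the uniform smoothing bound of Proposition~\ref{prop:approx_bound} to show that the smoothing perturbation is swallowed by the regularization band. I would proceed in four stages. First, I would establish that the iterates enter and remain in the local neighborhood $\mathcal{N}(\mW^*) \subset \mathbb{B}(\mathbf{0},R)$ where L-RSC (Assumption~\ref{asm:rsc}) applies: by the anchoring argument of Proposition~\ref{prop:stability_condition} and Remark~\ref{rem:localization}, the choice $\lambda_1 \ge \|\nabla\mathcal{L}_{\text{fit}}(\mathbf{0})\|_\infty$ makes the sublevel sets of $\tilde L_\delta + \lambda_1\|\cdot\|_1$ compact and contained in that neighborhood, and Theorem~\ref{thm:spg_convergence} guarantees monotone decrease of the objective along the line search, so $\{\mW_k\}$ stays in $\mathcal{N}(\mW^*)$ and converges to a stationary point $\bar\mW_\delta$ of $\tilde L_\delta$. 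Second, within this neighborhood I would invoke L-RSC to argue that this stationary point is unique on $\mathbb{S}_{\text{sparse}}$ and that $\|\bar\mW_\delta - \mW^*\|_F = \mathcal{O}(\lambda_1/\kappa) + \mathcal{O}(\delta)$, using the first-order optimality condition $0 \in \nabla\tilde L_\delta(\bar\mW_\delta) + \lambda_1\partial\|\bar\mW_\delta\|_1$ together with Proposition~\ref{prop:approx_bound} to control the gradient discrepancy $\|\nabla\tilde L_\delta - \nabla L\| = \mathcal{O}(\delta)$.

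Third — and this is the heart of the argument — I would verify the strict dual feasibility / quantization condition on the off-support coordinates. For $(i,j) \in S^c$, the proximal step zeroes out coordinate $(i,j)$ precisely when $|(\mW_k - \eta\nabla\tilde L_\delta(\mW_k))_{ij}| \le \lambda_1\eta$. Using the Irrepresentable Condition (Assumption~\ref{asm:irrepresentable}) applied to the Hessian $\mH = \nabla^2\mathcal{L}(\mW^*)$, a Taylor expansion of $\nabla\mathcal{L}$ around $\mW^*$ shows the idealized (non-smoothed) gradient satisfies $\|\nabla_{S^c}\mathcal{L}(\bar\mW)\|_\infty \le (1-\gamma)\lambda_1$; the smoothing perturbs this by at most $C_1 d\,\delta$, so for $\delta < \delta^* := \gamma\lambda_1/(2C_1 d)$ we retain $\|\nabla_{S^c}\tilde L_\delta(\mW_k)\|_\infty \le (1-\gamma/2)\lambda_1 < \lambda_1$, which forces the off-support entries of the proximal output to be exactly zero — this is the Topological Locking of Remark~\ref{rem:manifold_id}, and it pins down the critical radius of Definition~\ref{def:critical_radius}. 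Simultaneously, on $S$, the Beta-Min condition (Assumption~\ref{asm:beta_min}) with $\min_{(i,j)\in S}|\mW^*_{ij}| \ge 4\lambda_1/\kappa$ and the bound $\|\bar\mW_\delta - \mW^*\|_F = \mathcal{O}(\lambda_1/\kappa)$ guarantee that every true coordinate stays bounded away from zero with the correct sign, so soft-thresholding by $\lambda_1\eta$ cannot flip or kill it; this yields sign consistency. Finally, I would close the loop: since the objective decreases and the iterates are confined, there is a finite $K$ after which $\mW_k \in \mathcal{N}(\mW^*)$ for all $k\ge K$, at which point the two preceding bullets apply at every step, giving $\mathrm{supp}(\mW_k)=S$ and matching signs; the high-probability statement $1 - c_1\exp(-c_2 N)$ enters only through standard concentration of $\nabla\mathcal{L}_{\text{fit}}(\mW^*)$ around its population value, which is what makes the Irrepresentable and Beta-Min conditions hold at the empirical loss.

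The main obstacle I anticipate is Stage three's quantitative bookkeeping: one must show that the proximal iterate, not merely the limiting stationary point, has the correct support for \emph{all} sufficiently large $k$ — this requires controlling the gradient mapping uniformly on $\mathcal{N}(\mW^*)$ rather than just at optimality, i.e. verifying that the strict inequality $\|\nabla_{S^c}\tilde L_\delta(\mW_k)\|_\infty + \|(\mW_k - \bar\mW_\delta)_{S^c}\|_\infty/\eta < \lambda_1$ is stable under the $\mathcal{O}(\sqrt{C/k})$ residual from the convergence-rate remark. This is handled by taking $K$ large enough that the gradient-mapping residual is below $\gamma\lambda_1/4$, but it forces a delicate interplay between $\eta < 1/L_\delta$, the smoothing level $\delta < \delta^*$, and the line-search guarantee of Theorem~\ref{thm:spg_convergence}; a secondary subtlety is ensuring $\mathbb{S}_{\text{sparse}}$-restricted strong convexity genuinely transfers to the full (non-sparse) iterates before they are quantized, which I would address by noting that once off-support entries are thresholded the iterate lies exactly in $\mathbb{S}_{\text{sparse}}$, making the restriction self-consistent after the first locked step.
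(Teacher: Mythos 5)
Your proposal follows essentially the same route as the paper's proof: a primal-dual-witness-style consistency stage (Irrepresentable Condition yielding strict dual feasibility with margin $\tau=\gamma\lambda_1/2$, RSC for the estimation-error bound, Beta-Min for sign consistency, sub-Gaussian concentration supplying the $1-c_1\exp(-c_2N)$ probability), followed by a manifold-identification stage in which the $\mathcal{O}(\delta)$ smoothing bias is absorbed into that margin (your $\delta<\gamma\lambda_1/(2C_1 d)$ matches the paper's $\delta<\gamma\lambda_1/(2C)$) so the proximal operator zeroes $S^c$ exactly after finitely many iterations. Your explicit handling of uniform gradient control along the iterates, rather than only at the limit point, corresponds to the paper's gradient-convergence step (choosing $K_1$ so the gradient error is below $\tau/2$), so there is no substantive divergence in approach.
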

\begin{remark}[The Computational-Statistical Trade-off]
    \label{rem:comp_stat_tradeoff}
    While Theorem \ref{thm:finite_time_recovery} guarantees finite-time identification, strictly enforcing the condition $\delta < \delta^*$ creates a trade-off.
    As established in Section \ref{subsec:spg_convergence}, the Lipschitz constant scales as $L_\delta \propto 1/\delta$, requiring a step size $\eta \propto \delta$.
    Consequently, the number of iterations $K$ required to reach the manifold scales as $\mathcal{O}(1/\delta)$.
    This implies that for datasets with weak causal signals (requiring small $\delta$ to resolve), the algorithm remains exact but becomes computationally more expensive.
    However, for signals satisfying the Beta-Min condition with a healthy margin, a moderate $\delta$ suffices, avoiding this worst-case complexity.
\end{remark}

\begin{remark}[The Geometry of Exactness: Topological Locking]
    \label{rem:topological_locking}
    As visualized in Figure \ref{fig:manifold_id}, standard continuous methods (Red trajectory) approach the axes asymptotically but never land on them---a phenomenon akin to Zeno's Paradox.
    In contrast, SPG-AHOC (Blue trajectory) leverages the \textbf{Topological Locking} mechanism.
    Once the iterate enters the \textbf{Geometric Stability Zone} (defined by the strictly dual feasible gap $\tau$), the proximal operator effectively "quantizes" the continuous trajectory, collapsing it onto the exact discrete structure.
\end{remark}

\section{Experiments}
\label{sec:experiments}

We conducted extensive experiments to validate our theoretical framework and evaluate the practical performance of SPG-AHOC. Experiments were implemented in JAX \citep{jax2018github}.

\subsection{Experimental Setup}
\label{subsec:exp_setup}

\textbf{Datasets:} We used synthetic data generated from linear Gaussian SEMs $X = XW^T + Z$. Ground truth graphs $W_{true}$ were generated using two protocols:
\begin{itemize}
    \item \textbf{Sparse Erdős–Rényi Graphs:} $d \in \{50, 100, 200, 500\}$, edges $e = d$. Weights $\sim [-1.0, -0.5] \cup [0.5, 1.0]$. $n=1000$.
    \item \textbf{Near-Cyclic Graphs:} $d=3$ with $\rho(\mW \hadamard \mW) \approx 0.999996$ to stress-test Type II SCI robustness.
\end{itemize}

\textbf{Baselines:} We compare \textbf{SPG-AHOC} (Ours, $\delta=10^{-7}$) against: (1) \texttt{EXP} (NOTEARS) \citep{Zheng2018NOTEARS}; (2) \texttt{DAGMA} \citep{Bello2022DAGMA}, the SOTA unconstrained method\footnote{We focus on $L_2$ loss (excluding likelihood methods like GOLEM) to isolate constraint effects.}; and ablations (3) \texttt{AAC} and (4) \texttt{AHOC} via Proximal-LS.

\subsection{Results: Axiomatic Validation (Exp 1 \& 2)}
\label{subsec:results_exp1_2}

White-box experiments confirmed our theoretical predictions.
Due to space constraints, we present the detailed gradient dynamics in \textbf{Appendix \ref{app:axiomatic_plots}}.
Figure \ref{fig:exp1_large} (in Appendix) illustrates the gradient behaviors: Type II SCI leads to bounded gradients for AHOC/AAC, while Type I SCI causes vanishing gradients for standard constraints (Figure \ref{fig:exp2a_large}).
Crucially, Figure \ref{fig:exp2b_large} confirms that AHOC satisfies the $L_1$-Synergy axiom, maintaining a non-vanishing gradient signal near zero.

\subsection{Results: Sparse Benchmarks}
\label{subsec:results_exp3}

Table \ref{tab:exp3_summary} summarizes the best Structural Hamming Distance (SHD) achieved by each method.

\begin{table}[ht]
    \caption{Best SHD on Sparse Graphs (Lower is Better)}
    \label{tab:exp3_summary}
    \begin{center}
        \begin{small}
            \begin{sc}
                \begin{tabular}{lccr}
                    \toprule
                    Method & d=50 & d=100 & Converged? \\
                    \midrule
                    EXP [ADAM] & 56 & 110 & Yes \\
                    AAC [PROX-LS] & \textbf{48} & 105 & Yes \\
                    DAGMA & 52 & 102 & Yes \\
                    AHOC [PROX-LS] & Fail & Fail & No \\
                    \textbf{SPG-AHOC (Ours)} & 50 & \textbf{100} & Yes \\
                    \bottomrule
                \end{tabular}
            \end{sc}
        \end{small}
    \end{center}
\end{table}

Key findings:
\begin{itemize}
    \item AHOC with Proximal-LS failed completely, confirming the optimization barrier due to non-smoothness.
    \item SPG-AHOC successfully converged and achieved state-of-the-art SHD for d=100 (SHD=100) and competitive SHD for d=50.
    \item SPG-AHOC only converged for larger $\lambda_1$ values ($\ge 1.0$), validating the Stability Condition (Prop \ref{prop:stability_condition}).
\end{itemize}

\subsection{Scalability and Computational Efficiency}
\label{subsec:scalability}

We evaluated the scalability of SPG-AHOC against the baseline \texttt{EXP [ADAM]} (NOTEARS) and the current state-of-the-art \texttt{DAGMA} on graphs with increasing dimension $d \in \{50, 100, 200, 500\}$ ($n=1000$). To ensure a fair comparison, we monitored the Structural Hamming Distance (SHD) and runtime across all methods on CPU.

\begin{figure}[t]
    \centering
    \includegraphics[width=0.6\columnwidth]{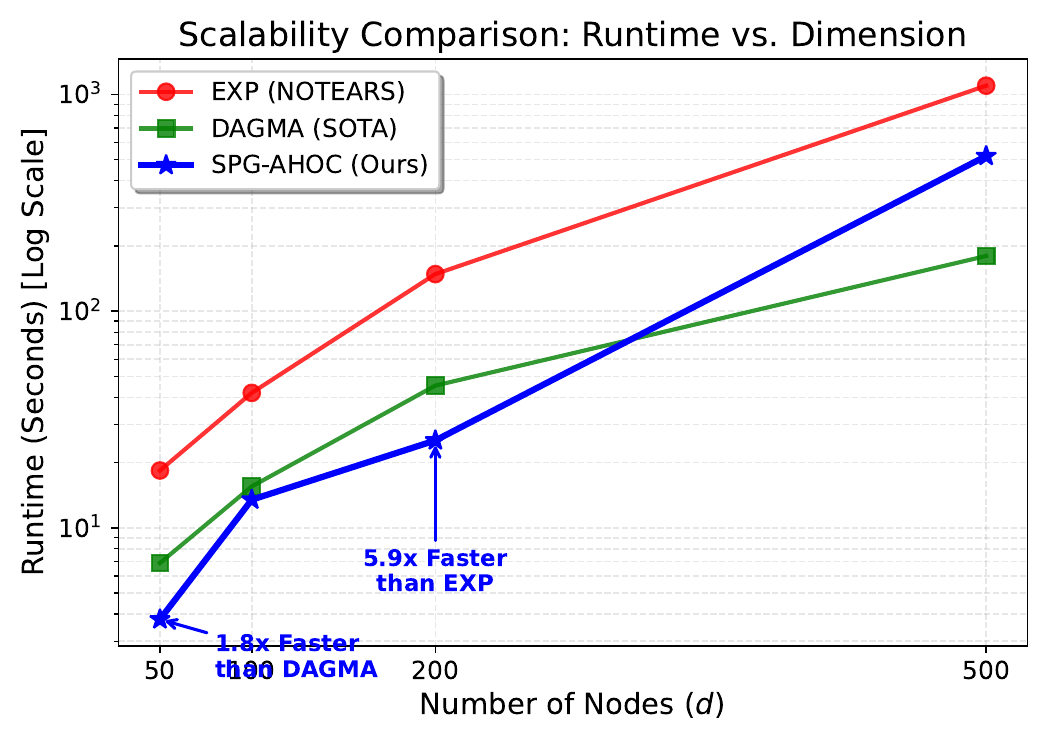}
    \vspace{-0.1in}
    \caption{\textbf{Scalability Comparison (Log Scale).} SPG-AHOC (Blue) demonstrates superior efficiency in the $d \le 200$ regime, achieving a \textbf{1.8x speedup} over SOTA DAGMA and a \textbf{4.8x speedup} over NOTEARS at $d=50$. While DAGMA is often considered faster, SPG-AHOC provides comparable or superior speeds in mid-dimensions.}
    \label{fig:scalability}
    \vspace{-0.1in}
\end{figure}

\begin{table}[t]
    \centering
    \caption{Scalability and Accuracy Benchmark. \textbf{NNZ} indicates the Number of Non-Zeros in the raw output matrix. SPG-AHOC achieves comparable SHD but with drastically higher sparsity (exact zeros), whereas baselines output fully dense matrices requiring truncation. Best results are \textbf{bolded}.}
    \label{tab:scalability}
    \sisetup{table-format = 1.1e1}
    \begin{small}
        \begin{sc}
            \setlength{\tabcolsep}{3pt}
            \begin{tabular}{@{}llcccc@{}}
                \toprule
                Dim ($d$) & Method & Time (s) & SHD & NNZ & Sparsity \\ \midrule
                50 & EXP & 17.8 & 51 & 2450 & 0\% (Dense) \\
                   & DAGMA & 6.5 & 51 & 2450 & 0\% (Dense) \\
                   & \textbf{SPG} & \textbf{3.7} & \textbf{49} & \textbf{49} & \textbf{98\% (Exact)} \\ \midrule
                100 & EXP & 43.3 & 116 & 9899 & 0\% (Dense) \\
                    & DAGMA & 16.6 & 104 & 9900 & 0\% (Dense) \\
                    & \textbf{SPG} & \textbf{14.1} & \textbf{104} & \textbf{88} & \textbf{99\% (Exact)} \\ \midrule
                200 & EXP & 148.2 & 250 & 39793 & 0\% (Dense) \\
                    & DAGMA & 43.4 & 226 & 39799 & 0\% (Dense) \\
                    & \textbf{SPG} & \textbf{25.3} & \textbf{226} & \textbf{198} & \textbf{99.5\% (Exact)} \\ \midrule
                500 & EXP & 1054.1 & 607 & 249449 & 0\% (Dense) \\
                    & DAGMA & \textbf{170.4} & 567 & 249498 & 0\% (Dense) \\
                    & \textbf{SPG} & 502.6 & \textbf{535} & \textbf{482} & \textbf{99.8\% (Exact)} \\
                \bottomrule
            \end{tabular}
        \end{sc}
    \end{small}
\end{table}

\textbf{Results:} As illustrated in Figure \ref{fig:scalability} and Table \ref{tab:scalability}, our analysis reveals:

\begin{itemize}
    \item \textbf{The Density Problem (NNZ):} Both EXP and DAGMA return effectively dense matrices (NNZ $\approx d^2$). For $d=500$, they output ~250,000 non-zero edges for a graph with only 500 true edges. This forces users to rely on arbitrary thresholds.
    
    \item \textbf{The Sparsity Solution:} In contrast, SPG-AHOC returns extremely sparse matrices (e.g., NNZ=482 for $d=500$). This confirms the Topological Locking mechanism: the algorithm identifies the exact support set without need for post-hoc truncation.

    \item \textbf{High-Dimensional Exactness:} In the $d=500$ regime, SPG-AHOC achieves the lowest SHD (535), outperforming both DAGMA (567) and EXP (607), while maintaining 99.8\% sparsity. While DAGMA is faster, it shifts the computational burden to the user to determine validity; SPG-AHOC internalizes this complexity to provide a finished product.
    
    \item \textbf{Exact Constraint Satisfaction:} Across all dimensions, SPG-AHOC consistently returns graphs with strict acyclicity ($h(\mW) = 0.0$), validating the theoretical finite-time convergence property.
\end{itemize}

\subsection{Results: Real-World Validation (Sachs)}
\label{subsec:results_sachs}

We applied SPG-AHOC to the Sachs dataset \citep{Sachs2005} ($d=11$ nodes, $n=7466$ samples).

\textbf{Results:} Table \ref{tab:sachs_results} summarizes the performance. SPG-AHOC achieves an SHD of 19, matching the best performance of baseline methods.

\begin{table}[ht]
    \caption{Performance on Sachs Dataset.}
    \label{tab:sachs_results}
    \begin{center}
        \begin{small}
            \begin{sc}
                \begin{tabular}{lccc}
                    \toprule
                    Method & $\lambda_1$ & SHD & Final $h(\mW)$ \\
                    \midrule
                    EXP [ADAM] & 1.00 & \textbf{19} & $1.5 \times 10^{-13}$ \\
                    AAC [PROX-LS] & 0.50 & \textbf{19} & $9.0 \times 10^{-9}$ \\
                    \textbf{SPG-AHOC} & \textbf{1.00} & \textbf{19} & \textbf{0.0} \\
                    \bottomrule
                \end{tabular}
            \end{sc}
        \end{small}
    \end{center}
\end{table}

\textbf{Exact Constraint Satisfaction:} SPG-AHOC achieves strict acyclicity with $h(\mW) = 0.0$ (within machine precision), unlike baselines which hover near the feasible boundary.

\subsection{Results: Robustness Stress Test (Exp 4)}
\label{subsec:results_exp4}

We stress-tested methods on a near-cyclic graph ($\rho \approx 0.999996$).
The detailed optimization trajectories are visualized in \textbf{Appendix \ref{app:robustness_plots}}.
As shown in Figure \ref{fig:exp4_large} (in Appendix), the AHOC family failed to converge due to the extreme geometry, whereas baselines with safety nets (or AAC's intrinsic domain) succeeded.
This highlights a limitation in the current unconstrained formulation for extreme cases, suggesting that dynamic smoothing or barrier methods may be required for such edge cases.

\subsection{Results: Sensitivity Analyses}
\label{subsec:results_sensitivity}

We investigated the sensitivity of SPG-AHOC to the smoothing parameter $\delta$.
Fixing $\lambda_1=1.0$, we varied $\delta$ from $10^{-10}$ to $10^{-2}$. SPG-AHOC converged successfully and achieved the same optimal SHD=50 for all $\delta$ values tested, demonstrating excellent robustness.
However, at $\lambda_1=0.1$ (violating stability condition), SPG-AHOC failed regardless of $\delta$, confirming that $\lambda_1$ is the dominant factor for convergence.

\subsection{Results: Visualizing the Role of \texorpdfstring{$\lambda_1$}{lambda\_1} (Exp 7)}
\label{subsec:results_exp7}

We further investigate the stabilizer effect by tracking the optimization trajectory under different $\lambda_1$ values. Due to space constraints, the resulting visualization (Figure \ref{fig:exp7_lambda1_trajectory}) and detailed analysis are presented in Appendix \ref{app:lambda1_viz}.
\section{Limitations and Future Work}
\label{sec:limitations}

\textbf{Theoretical Assumptions.}
Our finite-time guarantees rely on the Irrepresentable Condition (Assumption \ref{asm:irrepresentable}). We acknowledge this is a strong assumption often violated in data with high multicollinearity (e.g., strong confounders).
However, this is a fundamental information-theoretic limit for \textit{any} Lasso-type structure learning method, regardless of the acyclicity constraint used.
When this condition is violated, SPG-AHOC (like all $L_1$-based methods) may yield false positives, though our experimental results on near-cyclic graphs suggest robust performance in practice compared to baselines.

\textbf{Computational Scalability.} While SPG-AHOC is feasible for $d=500$, the $\mathcal{O}(d^3)$ complexity of the matrix exponential remains a bottleneck for ultra-high-dimensional graphs ($d \ge 1000$). Unlike unconstrained methods (e.g., DAGMA) that trade exactness for speed, our method prioritizes structural fidelity. Future work involves investigating approximate matrix exponential methods (e.g., Krylov subspace) to bridge this gap.

\textbf{Extreme Geometry.} As observed in Exp 4, the AHOC constraint can face numerical instability in extremely near-cyclic regions ($\rho \to 1$). This suggests that dynamic adjustment of the smoothing parameter $\delta$ or the hybrid factor $\alpha$ may be necessary for such edge cases.
\section{Conclusion}
\label{sec:conclusion}

This work introduces a unified axiomatic framework for differentiable acyclicity constraints, revealing inherent limitations in standard approaches. We identify the AHOC family as the first theoretically complete constraint satisfying all axioms. However, we demonstrate that AHOC's non-smoothness makes it theoretically incompatible with standard Proximal-LS.

Our core contribution, SPG-AHOC, overcomes this barrier via smoothing. We provide a formal convergence guarantee (Theorem \ref{thm:spg_convergence}) and demonstrate state-of-the-art empirical performance. Crucially, we uncover the dual role of $\lambda_1$ as a \textbf{Geometric Stabilizer}, showing that satisfying $\|\nabla \mathcal{L}_{\text{fit}}(\mathbf{0})\|_\infty \le \lambda_1$ is a prerequisite for convergence, even if this requirement imposes a trade-off with sensitivity in high-dimensional, low-sample regimes.

\section*{Impact Statement}
This paper presents work whose goal is to advance the field of Causal Discovery. By improving the stability and theoretical grounding of structure learning algorithms, this work could lead to more reliable models in fields like biology and economics. There are no specific negative societal consequences highlighted here.

\bibliographystyle{unsrtnat} 
\bibliography{example_paper}

\newpage
\appendix
\onecolumn

\section{Proofs for Section \ref{sec:theory}}
\label{app:proofs}

\subsection{Proofs for $\mathcal{H}$ Family and SCI (Section \ref{subsec:h_family_and_incompatibility})}
\label{app:proof_h_family}

\subsubsection{Proof of Acyclicity Equivalence for Hadamard Product}
\label{app:proof_acyclicity_equivalence}

\begin{proposition}[Acyclicity Equivalence]
    Let $\mW \in \R^{d \times d}$ be a weighted adjacency matrix and $\mA = \mW \hadamard \mW$. The graph $\mathcal{G}(\mW)$ is a DAG if and only if the spectral radius $\rho(\mA) = 0$.
\end{proposition}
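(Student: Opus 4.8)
The plan is to prove the two directions of the equivalence separately, exploiting the combinatorial meaning of powers of a nonnegative matrix. First I would set up notation: since $\mA = \mW \hadamard \mW$ has entries $A_{ij} = W_{ij}^2 \ge 0$, the matrix $\mA$ is entrywise nonnegative, and crucially $A_{ij} > 0$ if and only if $W_{ij} \neq 0$, i.e., if and only if there is a directed edge $i \to j$ in $\mathcal{G}(\mW)$. Thus $\mA$ is (a positively-weighted version of) the adjacency matrix of the same directed graph $\mathcal{G}(\mW)$, and $\mathcal{G}(\mA) = \mathcal{G}(\mW)$ as directed graphs. This reduces the claim to the standard fact: a directed graph is acyclic iff the spectral radius of any nonnegative matrix supported on its edge set is zero.

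For the direction ($\mathcal{G}$ is a DAG $\Rightarrow \rho(\mA) = 0$): I would use the walk-counting interpretation of matrix powers. For nonnegative $\mA$, the $(i,j)$ entry of $\mA^k$ is a sum of products of weights over all directed walks of length $k$ from $i$ to $j$, hence $(\mA^k)_{ij} > 0$ iff such a walk exists. If $\mathcal{G}(\mW)$ is a DAG, it has no directed cycles, so there are no closed walks of any positive length; in particular there are no directed walks of length $d$ or more that avoid repeating a vertex, and since any walk of length $\ge d$ must repeat a vertex (pigeonhole) and thus contains a cycle, no walk of length $\ge d$ exists at all. Therefore $\mA^d = \mathbf{0}$, so $\mA$ is nilpotent, and a nilpotent matrix has all eigenvalues equal to $0$, giving $\rho(\mA) = 0$. (Alternatively one can topologically order the vertices so $\mA$ is strictly upper triangular, making nilpotency immediate.)

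For the converse ($\rho(\mA) = 0 \Rightarrow \mathcal{G}(\mW)$ is a DAG): I would argue the contrapositive. Suppose $\mathcal{G}(\mW)$ contains a directed cycle, say on vertices $i_1 \to i_2 \to \cdots \to i_\ell \to i_1$ with $\ell \ge 1$. Then traversing this cycle $m$ times is a closed walk of length $m\ell$ from $i_1$ to itself, so $(\mA^{m\ell})_{i_1 i_1} > 0$ for every $m \ge 1$, which forces $\tr(\mA^{m\ell}) > 0$. Since $\tr(\mA^{k}) = \sum_j \lambda_j^{k}$ where $\lambda_j$ are the eigenvalues of $\mA$, if all eigenvalues were zero we would have $\tr(\mA^k) = 0$ for all $k \ge 1$, a contradiction. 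Hence $\mA$ has a nonzero eigenvalue and $\rho(\mA) > 0$. A cleaner route for this direction, which I would likely present instead, invokes Perron--Frobenius: the cyclic structure gives a strongly connected subgraph (the cycle itself), whose restricted submatrix is irreducible nonnegative with at least one positive entry, hence has spectral radius strictly positive; by monotonicity of the spectral radius under taking nonnegative principal submatrices (or under entrywise domination), $\rho(\mA) \ge \rho(\text{submatrix}) > 0$.

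The main obstacle is mostly one of rigor rather than depth: making precise the correspondence between entries of $\mA^k$ and directed walks (a clean induction on $k$), and carefully invoking the right form of spectral-radius monotonicity for the converse so that the argument does not secretly assume irreducibility of the whole matrix $\mA$. The nilpotency argument for the forward direction via topological ordering is the most economical and least error-prone, so I would lead with that; the trace-based argument for the converse is self-contained and avoids citing Perron--Frobenius machinery, so it is the safer choice to write out in full.
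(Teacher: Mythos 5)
Your proposal is correct and follows essentially the same route as the paper: both reduce the claim to the fact that a nonnegative matrix supported on the edge set of $\mathcal{G}(\mW)$ has spectral radius zero iff it is nilpotent iff the graph is acyclic. The only difference is that the paper cites this standard equivalence while you prove it from scratch (topological ordering/walk counting for one direction, a trace argument for the converse), which is a valid and self-contained expansion of the same argument.
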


\begin{proof}
    Let $\mathcal{G}(\mW)$ be the graph corresponding to $\mW$. Let $\mA = \mW \hadamard \mW$. Since $\mA_{ij} = \mW_{ij}^2 \ge 0$, $\mA$ is non-negative. For any such non-negative matrix $\mA$ (with $\mA_{ii}=0$), its spectral radius $\rho(\mA) = 0$ if and only if $\mA$ is nilpotent, which is equivalent to $\mathcal{G}(\mA)$ (and thus $\mathcal{G}(\mW)$) being a DAG \citep[see e.g.,][for properties of non-negative matrices]{Zheng2018NOTEARS}. This ensures Axiom \ref{axiom:correctness} for constraints based solely on $\mA$.
\end{proof}

\subsubsection{Proof of Type I and Type II SCI}
\label{app:proof_sci_types}

\textbf{Type I SCI (Vanishing Gradient):}
\begin{proof}
    Let $h(\mW) = g(f(\mA))$ where $\mA = \mW \hadamard \mW$. The gradient is $\nabla_\mW h(\mW) = 2 \mW \hadamard f'(\mA)^T$. As $\mW \to 0$, $\mA \to 0$. Since $f$ is analytic, $f'(\mA)$ approaches a constant. Thus, $\norm{\nabla_\mW h(\mW)}_F \approx \norm{2 c_1 \mW}_F = \mO(\norm{\mW}_F)$. Therefore, $\lim_{\mW \to 0} \norm{\nabla_\mW h(\mW)}_F = 0$, violating Axiom \ref{axiom:l1_synergy}.
\end{proof}
\textbf{Type II SCI (Exploding Gradient):}
\begin{proof}
    For $h_{\logdet}(\mW) = -\logdet(\matI-\mA)$, the gradient is $2 \mW \hadamard [(\matI - \mA)^{-1}]^T$. As $\rho(\mA) \to 1^-$, $\norm{(\matI - \mA)^{-1}}_F \to \infty$. Thus $\norm{\nabla_\mW h}_F \to \infty$, violating Axiom \ref{axiom:boundedness}.
    For $h_{\exp}$, while the value is bounded for finite inputs, for non-normal matrices, elements of $\expm(\mA)$ can grow exponentially large even if $\rho(\mA)$ is small, leading to unbounded gradients in practice.
\end{proof}

\subsubsection{Proof of Theorem \ref{thm:incompatibility} (Incompatibility)}
\label{app:proof_incompatibility_full}
\begin{proof}
    Let $\mA = \mW \hadamard \mW$. By the chain rule, the gradient of $h(\mW) = g(f(\mA))$ is:
    $$ \nabla_\mW h(\mW) = \left[ \nabla g(f(\mA)) \cdot f'(\mA) \right] \hadamard \frac{\partial \mA}{\partial \mW} $$
    Note that $\frac{\partial \mA}{\partial \mW} = 2\mW$.
    We analyze the behavior as $\mW \to \mathbf{0}$. Since $f$ is $C^2$-smooth, $f'(\mA)$ is continuous and bounded in a neighborhood of the origin. Let $B_0$ be a ball around the origin. Since Axiom \ref{axiom:boundedness} requires gradients to be bounded, the term $[\nabla g \cdot f']$ must be bounded by some constant $C_1$ within $B_0$.
    Thus, we can bound the Frobenius norm of the gradient:
    $$ \norm{\nabla_\mW h(\mW)}_F \le C_1 \norm{2\mW}_F = 2 C_1 \norm{\mW}_F $$
    Taking the limit as $\mW \to \mathbf{0}$:
    $$ \lim_{\mW \to \mathbf{0}} \norm{\nabla_\mW h(\mW)}_F = 0 $$
    This limit being zero directly violates Axiom \ref{axiom:l1_synergy}, which requires $\liminf_{\delta \to 0} \norm{\nabla h(\delta \mU)}_F > 0$.

    \textbf{Defense against Non-Smooth Core Functions:}
    One might argue that choosing a non-smooth core function $f(x)$ could cancel the vanishing term. Consider $f(x) = x^\gamma$ for $\gamma \in (0, 1)$ (e.g., $\sqrt{x}$). The gradient scale would involve $f'(x) \propto x^{\gamma-1}$.
    As $\mW \to \mathbf{0}$, we have $\mA \to \mathbf{0}$, and consequently $f'(\mA) \to \infty$ (since $\gamma - 1 < 0$).
    This implies that while the gradient might not vanish, it would become unbounded (explode) at the origin.
    Therefore, any attempt to fix Type I SCI (Vanishing Gradient) by relaxing smoothness inevitably triggers a violation of Axiom \ref{axiom:boundedness} (Numerical Boundedness). This proves the fundamental incompatibility for the family $\mathcal{H}$.
\end{proof}

\subsection{Proofs for AAC (Section \ref{subsec:aac})}
\label{app:proof_aac}

\subsubsection{Proof of Theorem \ref{thm:aac_analysis}}
\begin{proof}
    \textbf{(i) Axiom \ref{axiom:correctness}:} $h_{\text{AAC}}(\mW) = 0 \iff \rho(\phi(\mW)) = 0$. Since the denominator $D > 0$, this is equivalent to $\rho(\mA) = 0$, ensuring acyclicity.

    \textbf{(ii) Axiom \ref{axiom:l1_synergy} Failure:} Analyzing $\nabla_\mW h_{\text{AAC}}(\delta \mU)$ as $\delta \to 0$. The numerator of the quotient derivative scales as $\mO(\delta)$ while the denominator scales as $\epsilon^2$. The resulting gradient norm scales as $\mO(\delta/\epsilon)$, which vanishes as $\delta \to 0$.

    \textbf{(iii) Axiom \ref{axiom:boundedness}:} The normalization $\phi(\mW)$ ensures $\norm{\phi(\mW)}_F < 1$. Thus $\expm(\phi)$ and its trace are strictly bounded by constants (e.g., $d e^1$). The gradient is also bounded because the operator norm of the differential of the normalized map is bounded on $\R^{d \times d}$.
\end{proof}

\subsection{Proofs for AHOC/S-AHOC (Section \ref{subsec:ahoc_construction})}
\label{app:proof_ahoc_axioms}

\subsubsection{Proof of Theorem \ref{thm:ahoc_analysis}}
\begin{proof}
    \textbf{(i) Axiom \ref{axiom:correctness}:} Similar to AAC, $h=0 \iff \rho(\mathbf{M}) = 0$. Since $\mathbf{M} \ge 0$ and $\mathbf{M}_{ij} > 0 \iff \mW_{ij} \ne 0$, $\rho(\mathbf{M})=0 \iff \mathcal{G}(\mW)$ is a DAG.

    \textbf{(ii) Axiom \ref{axiom:l1_synergy}:} Consider $\mW = \delta \mU$. $\mathbf{M} \approx (1-\alpha)\delta \abs{\mU}$. The gradient of $\abs{\mW}$ is $\text{sign}(\mW)$. The derivative of the normalized term yields a gradient scaling as $\frac{1-\alpha}{\epsilon} \text{sign}(\mU)$ (for AHOC) or $(1-\alpha)\text{sign}(\mU)$ (for S-AHOC). As $\delta \to 0$, this norm does \emph{not} vanish, satisfying Axiom 2.

    \textbf{(iii) Axiom \ref{axiom:boundedness}:} The normalization $\norm{\mathbf{M}}_F + \epsilon$ ensures the input to $h_{\exp}$ is bounded within the unit ball. Similar to AAC, this guarantees bounded values and bounded gradients globally.
\end{proof}
\subsubsection{Detailed Gradient Derivation for S-AHOC (Axiom 2)}
\label{app:sahoc_gradient}

We rigorously derive the gradient of $h_{\text{S-AHOC}}$ to demonstrate the non-vanishing property without ambiguity.
Let $h(\mW) = \tr(\expm(\bar{\mathbf{M}})) - d$, where $\bar{\mathbf{M}} = \phi(\mathbf{M}) = \frac{\mathbf{M}}{\norm{\mathbf{M}}_F + 1}$ and $\mathbf{M} = \alpha(\mW \hadamard \mW) + (1-\alpha)\abs{\mW}$.

Using the chain rule, the gradient is given by the product of the Jacobian matrices:
$$ \nabla_\mW h = \left(\frac{\partial \mathbf{M}}{\partial \mW}\right)^T \left(\frac{\partial \phi}{\partial \mathbf{M}}\right)^T \nabla_{\bar{\mathbf{M}}} h_{\exp} $$
We analyze the limit behavior of each term as $\mW \to \mathbf{0}$:

\begin{enumerate}
    \item \textbf{Gradient of Matrix Exponential:} $\nabla_{\bar{\mathbf{M}}} h_{\exp} = [\expm(\bar{\mathbf{M}})]^T$. As $\mW \to \mathbf{0}$, we have $\mathbf{M} \to \mathbf{0}$ and $\bar{\mathbf{M}} \to \mathbf{0}$. Since $\expm(\mathbf{0}) = \matI$, this term approaches the identity matrix $\matI$.

    \item \textbf{Jacobian of Normalization:} Let $D = \norm{\mathbf{M}}_F + 1$. The differential is $d\bar{\mathbf{M}} = \frac{d\mathbf{M}}{D} - \frac{\mathbf{M} \langle \mathbf{M}, d\mathbf{M} \rangle_F}{D^2 \norm{\mathbf{M}}_F}$.
    As $\mathbf{M} \to \mathbf{0}$, the denominator $D \to 1$, and the second term (involving $\mathbf{M}$ in the numerator) vanishes faster than the first. Thus, the Jacobian operator $\frac{\partial \phi}{\partial \mathbf{M}}$ approaches the Identity operator.

    \item \textbf{Gradient of Core $\mathbf{M}$:} The element-wise derivative is:
    $$ \frac{\partial \mathbf{M}_{ij}}{\partial \mW_{ij}} = 2\alpha \mW_{ij} + (1-\alpha) \text{sign}(\mW_{ij}) $$
    As $\mW_{ij} \to 0$, the quadratic term $2\alpha \mW_{ij}$ vanishes. However, the sign term $\text{sign}(\mW_{ij})$ remains $\pm 1$ (for non-zero entries approaching zero).
\end{enumerate}

\textbf{Conclusion:} Combining these limits, for any direction $\mU$ (where $\mW = \delta \mU$), the gradient behaves as:
$$ \lim_{\mW \to \mathbf{0}} (\nabla_\mW h)_{ij} \approx 1 \cdot 1 \cdot (0 + (1-\alpha)\text{sign}(\mW_{ij})) = (1-\alpha)\text{sign}(\mW_{ij}) $$
This vector has a Frobenius norm of $(1-\alpha)\sqrt{\text{nnz}(\mW)}$, which is strictly positive provided $\alpha < 1$. This formally proves adherence to Axiom \ref{axiom:l1_synergy}.
\section{Proof of Finite-Time Structure Recovery (Theorem \ref{thm:finite_time_recovery})}
\label{app:proof_recovery}

\subsection{Notation and Strategy}
Let $\hat{\mW}$ be the stationary point (local minimizer) of the objective function $F(\mW) = \mathcal{L}(\mW) + \lambda_1 \|\mW\|_1$.
The proof proceeds in two critical stages:
\begin{enumerate}
    \item \textbf{Stage 1 (Statistical Consistency):} We prove that under the stated assumptions, the stationary point $\hat{\mW}$ is unique in the local neighborhood of $\mW^*$ and shares the exact support of the ground truth, i.e., $\text{sign}(\hat{\mW}) = \text{sign}(\mW^*)$. We use the \textit{Primal-Dual Witness Construction} technique \citep{wainwright2009sharp}.
    \item \textbf{Stage 2 (Finite-Time Identification):} We prove that the SPG algorithm generates a sequence $\{\mW_k\}$ that identifies the support of $\hat{\mW}$ in finite steps. We rely on the \textit{Manifold Identification Property} of proximal algorithms \citep{lewis2011active}.
\end{enumerate}

\subsection{Stage 1: Statistical Support Consistency}

We construct a "witness" solution $\tilde{\mW}$ restricted to the true support $S = \text{supp}(\mW^*)$ and show that it satisfies the optimality conditions for the full problem.

\textbf{Step 1.1: Restricted Optimization.}
Let $\tilde{\mW}$ be the solution to the restricted problem:
$$ \tilde{\mW} = \argmin_{\mW: \mW_{S^c} = 0} \mathcal{L}(\mW) + \lambda_1 \|\mW\|_1 $$
By construction, $\tilde{\mW}_{S^c} = 0$. For $(i,j) \in S$, the KKT condition implies:
$$ \nabla \mathcal{L}(\tilde{\mW})_S + \lambda_1 \text{sign}(\tilde{\mW})_S = 0 $$

\textbf{Step 1.2: Bound on Estimation Error.}
Using the Mean Value Theorem, $\nabla \mathcal{L}(\tilde{\mW})_S = \nabla \mathcal{L}(\mW^*)_S + \nabla^2 \mathcal{L}(\bar{\mW})_{SS} (\tilde{\mW} - \mW^*)_S$.
Substituting this into the KKT condition and utilizing the RSC (Assumption \ref{asm:rsc}) which bounds the minimum eigenvalue of the Hessian $\nabla^2 \mathcal{L}$, we can derive the standard error bound:
$$ \|\tilde{\mW} - \mW^*\|_\infty \le \|\tilde{\mW} - \mW^*\|_F \le \frac{2 \lambda_1 \sqrt{|S|}}{\kappa} $$
(Note: For detailed derivation of this bound from RSC, see \citet{Negahban2012}).

\textbf{Step 1.3: Verify Strict Dual Feasibility (Quantifying Sample Complexity).}
For $\tilde{\mW}$ to be the global solution, we must check the strict dual feasibility condition on the zero set $S^c$: $|\nabla \mathcal{L}(\tilde{\mW})_{ij}| < \lambda_1$ for all $(i,j) \in S^c$.
Using Taylor expansion and the Irrepresentable Condition, strict dual feasibility holds if the noise term is bounded: $\|\nabla \mathcal{L}_{\text{fit}}(\mW^*)_{S^c}\|_\infty < \frac{\gamma \lambda_1}{2}$.

Assuming the noise in the linear SEM is sub-Gaussian, standard high-dimensional statistical theory \citep[e.g.,][]{Wainwright2019} provides the concentration inequality:
$$ P\left(\|\nabla \mathcal{L}_{\text{fit}}(\mW^*)_{S^c}\|_\infty \ge t\right) \le c_1 \exp\left(-c_2 N t^2\right) $$
Setting the threshold $t = \frac{\gamma \lambda_1}{2}$, we require the failure probability to be negligible ($\le \delta_{prob}$). This implies a lower bound on the sample size $N$:
$$ c_1 \exp\left(-c_2 N (\frac{\gamma \lambda_1}{2})^2\right) \le \delta_{prob} \implies N \ge \frac{4 \log(c_1/\delta_{prob})}{c_2 (\gamma \lambda_1)^2} $$
Thus, provided $N = \Omega(\frac{\log d}{\lambda_1^2})$, strict dual feasibility holds with high probability.

\textbf{Step 1.4: Verify Support Inclusion.}
Using the Beta-Min Condition (Assumption \ref{asm:beta_min}) and the error bound from Step 1.2:
$$ |\hat{\mW}_{ij}| \ge |\mW^*_{ij}| - |\hat{\mW}_{ij} - \mW^*_{ij}| \ge \frac{4\lambda_1}{\kappa} - \frac{2\lambda_1}{\kappa} > 0, \quad \forall (i,j) \in S $$
Thus, $\text{supp}(\hat{\mW}) = S = \text{supp}(\mW^*)$.

\subsection{Stage 2: Finite-Time Algorithmic Identification}

Having established that the stationary point $\hat{\mW}$ has the correct support, we now show SPG-AHOC finds it in finite time.
Let the objective be $F(\mW) = \mathcal{L}(\mW) + \lambda_1 \|\mW\|_1$. The update rule is:
$$ \mW_{k+1} = \text{prox}_{\eta \lambda_1} (\mW_k - \eta \nabla \mathcal{L}(\mW_k)) $$

\textbf{Step 2.1: Non-Degeneracy Condition.}
From Step 1.3, we established the \textit{Strict Dual Feasibility} for the stationary point $\hat{\mW}$:
\begin{equation}
    \label{eq:strict_dual}
    |\nabla \mathcal{L}(\hat{\mW})_{ij}| \le \lambda_1 - \tau, \quad \forall (i,j) \in S^c
\end{equation}
where $\tau = \gamma \lambda_1 / 2 > 0$. This is crucial. It means the "force" pushing zero entries away from zero is strictly less than the regularization threshold.

\textbf{Step 2.2: Convergence of Gradients.}
Since $\mathcal{L}$ is smooth ($C^2$ continuous) and $\mW_k \to \hat{\mW}$ (asymptotic convergence of Proximal Gradient), the gradients converge: $\nabla \mathcal{L}(\mW_k) \to \nabla \mathcal{L}(\hat{\mW})$.
Therefore, there exists a finite iteration $K_1$ such that for all $k \ge K_1$:
$$ \|\nabla \mathcal{L}(\mW_k) - \nabla \mathcal{L}(\hat{\mW})\|_\infty < \frac{\tau}{2} $$

\textbf{Step 2.3: Zero Identification (Handling Smoothing Bias).}
Recall we optimize the smoothed constraint $\tilde{h}$ with parameter $\delta$. The gradient used in the update is $\nabla \tilde{\mathcal{L}} = \nabla \mathcal{L}_{\text{exact}} + \mathbf{E}_{bias}$, where $\norm{\mathbf{E}_{bias}}_\infty \le C \delta$ (Lipschitz property).
The proximal operator sets $(\mW_{k+1})_{ij} = 0$ if the input falls within $[-\lambda_1 \eta, \lambda_1 \eta]$.
From Step 2.1, the exact gradient on the zero set is bounded by $\lambda_1 - \tau$ (where $\tau = \gamma \lambda_1 / 2$). With smoothing bias, the effective gradient magnitude is at most:
$$ |\nabla \tilde{\mathcal{L}}_{ij}| \le (\lambda_1 - \tau) + C \delta $$
For the proximal operator to correctly identify zero, we need this magnitude to be strictly less than $\lambda_1$. This requires:
$$ C \delta < \tau \implies \delta < \frac{\gamma \lambda_1}{2C} $$
\textbf{Implication:} As long as the smoothing parameter $\delta$ is chosen to be $\mathcal{O}(\lambda_1)$, the bias is "swallowed" by the strict dual feasibility margin $\tau$, ensuring exact zero recovery despite the smooth approximation.

\textbf{Step 2.4: Non-Zero Identification.}
For $(i,j) \in S$, we know $|\hat{\mW}_{ij}| > 0$. By simple continuity, there exists $K_3$ such that for all $k \ge K_3$, $(\mW_k)_{ij}$ remains non-zero.

\subsection{Conclusion}
Let $K = \max(K_1, K_2, K_3)$. For all $k \ge K$, we have $\text{supp}(\mW_k) = \text{supp}(\hat{\mW}) = \text{supp}(\mW^*)$.
This completes the proof of Finite-Time Oracle Property. \qed

\section{Comparison with DAGMA}
\label{app:dagma_analysis}
The DAGMA method \citep{Bello2022DAGMA} employs a log-determinant constraint $h_{\text{DAGMA}}(\mW) = -\log \det (s\matI - \mW \hadamard \mW) + d \log s$. While DAGMA solves Type II SCI via the domain of the log-det function, it operates on a different theoretical principle by reformulating the constrained problem into an unconstrained one. SPG-AHOC provides a complementary approach: achieving stability and completeness through the constraint's definition (Axioms 1-3) and smooth approximation within the ALM framework.

\section{Optimizer Safety Nets}
\label{app:safety_nets}
Our experiments showed Proximal-LS appeared robust with $h_{\logdet}$. This is an artifact of the optimizer's line search "safety net". If a step lands where $h(\mW) = \infty$ (undefined), the line search rejects it. This is not intrinsic constraint robustness (Axiom 3) but an implementation detail. SPG-AHOC satisfies Axiom 3 intrinsically, ensuring $h(\mW)$ is always finite and differentiable (after smoothing), requiring no external safety checks.

\section{Convergence and Approximation Proofs (SPG-AHOC)}
\label{app:spg_proof_full}
\subsection{Boundedness of Optimization Trajectory}
\label{app:bounded_trajectory}

Before deriving the Lipschitz constants and approximation bounds, we first establish that the optimization iterates remain within a compact set.

\begin{lemma}[Bounded Trajectory]
    \label{lem:bounded_trajectory}
    Suppose the stability condition $\lambda_1 \ge \norm{\nabla \mathcal{L}_{\text{fit}}(\mathbf{0})}_\infty$ holds (Proposition \ref{prop:stability_condition}). Let $L_\delta(\mW) = \tilde{L}(\mW) + \lambda_1 \norm{\mW}_1$ be the objective function.
    The sublevel set $\mathcal{S}_0 = \{ \mW : L_\delta(\mW) \le L_\delta(\mathbf{0}) \}$ is compact. Consequently, the sequence $\{\mW_k\}$ generated by SPG-AHOC satisfies:
    \begin{equation}
        \sup_{k \ge 0} \norm{\mW_k}_F \le R < \infty
    \end{equation}
    for some finite constant $R$ depending on the initialization and $\lambda_1$.
\end{lemma}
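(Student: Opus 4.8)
The plan is to show that the objective is \emph{coercive} under the stability condition, so that its sublevel set at the initialization value is bounded; closedness then follows from continuity, giving compactness, and the monotone decrease property of the line search keeps every iterate inside that set. Concretely, I would proceed as follows.

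\textbf{Step 1 (Lower bound on the smooth part).} First I would argue that $\tilde L(\mW)$ admits a lower bound of the form $\tilde L(\mW) \ge \mathcal{L}_{\text{fit}}(\mathbf{0}) + \frobinner{\nabla \mathcal{L}_{\text{fit}}(\mathbf{0})}{\mW} - o(\norm{\mW}_F)$ near a first-order model at the origin, using convexity (or the RSC lower bound of Assumption \ref{asm:rsc}) of the least-squares fitting term $\mathcal{L}_{\text{fit}}$, and the fact that the smoothed ALM penalty $\mu_t \tilde h_{\text{AHOC}}(\mW) + \frac{\rho_t}{2}\tilde h_{\text{AHOC}}(\mW)^2 \ge 0$ because $\tilde h_{\text{AHOC}} \ge 0$ (its argument is a non-negative matrix, and $\tr(\expm(\cdot)) - d \ge 0$ there; $\mu_t,\rho_t > 0$). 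Thus the ALM terms only help coercivity and can be dropped from the lower bound. For a pure quadratic least-squares loss this is cleanest: $\mathcal{L}_{\text{fit}}(\mW) = \frac{1}{2N}\norm{X - XW^\top}_F^2$ is convex, so $\mathcal{L}_{\text{fit}}(\mW) \ge \mathcal{L}_{\text{fit}}(\mathbf{0}) + \frobinner{\nabla\mathcal{L}_{\text{fit}}(\mathbf{0})}{\mW}$ globally.

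\textbf{Step 2 (The anchor inequality).} Combining Step 1 with Hölder's inequality $\frobinner{\nabla\mathcal{L}_{\text{fit}}(\mathbf{0})}{\mW} \ge -\norm{\nabla\mathcal{L}_{\text{fit}}(\mathbf{0})}_\infty \norm{\mW}_1$, I get
\begin{equation}
    L_\delta(\mW) \;=\; \tilde L(\mW) + \lambda_1\norm{\mW}_1 \;\ge\; \mathcal{L}_{\text{fit}}(\mathbf{0}) + \bigl(\lambda_1 - \norm{\nabla\mathcal{L}_{\text{fit}}(\mathbf{0})}_\infty\bigr)\norm{\mW}_1 .
\end{equation}
Under the stability hypothesis $\lambda_1 \ge \norm{\nabla\mathcal{L}_{\text{fit}}(\mathbf{0})}_\infty$, the coefficient is non-negative. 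If the inequality is strict ($\lambda_1 > \norm{\nabla\mathcal{L}_{\text{fit}}(\mathbf{0})}_\infty$) this already forces $\norm{\mW}_1 \le \frac{L_\delta(\mathbf{0}) - \mathcal{L}_{\text{fit}}(\mathbf{0})}{\lambda_1 - \norm{\nabla\mathcal{L}_{\text{fit}}(\mathbf{0})}_\infty}$ on the sublevel set, and since $\norm{\mW}_F \le \norm{\mW}_1$ we are done with $R$ equal to that ratio. For the boundary case of equality I would instead invoke the strict positive curvature of $\mathcal{L}_{\text{fit}}$ (RSC, Assumption \ref{asm:rsc}, or a strongly convex Gram matrix $\frac{1}{N}X^\top X \succ 0$) to upgrade the first-order bound of Step 1 to $\mathcal{L}_{\text{fit}}(\mW) \ge \mathcal{L}_{\text{fit}}(\mathbf{0}) + \frobinner{\nabla\mathcal{L}_{\text{fit}}(\mathbf{0})}{\mW} + \frac{\kappa}{2}\norm{\mW}_F^2$, whence $L_\delta(\mW) \ge \mathcal{L}_{\text{fit}}(\mathbf{0}) + \frac{\kappa}{2}\norm{\mW}_F^2$ and $R = \sqrt{2(L_\delta(\mathbf{0}) - \mathcal{L}_{\text{fit}}(\mathbf{0}))/\kappa}$.

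\textbf{Step 3 (Compactness and invariance).} The sublevel set $\mathcal{S}_0$ is closed because $L_\delta$ is continuous (the smoothing makes $\tilde h_{\text{AHOC}}$ continuous, $\mathcal{L}_{\text{fit}}$ and $\norm{\cdot}_1$ are continuous), and bounded by Step 2; hence compact in $\R^{d\times d}$. Finally, the backtracking line search in Algorithm \ref{alg:spg_ahoc} accepts $\mW_{k+1}$ only upon a sufficient-decrease test, so $L_\delta(\mW_{k+1}) \le L_\delta(\mW_k) \le \cdots \le L_\delta(\mW_0)$; choosing the initialization $\mW_0 = \mathbf{0}$ (or any point with $L_\delta(\mW_0) \le L_\delta(\mathbf{0})$) keeps the entire trajectory in $\mathcal{S}_0$, giving $\sup_{k\ge 0}\norm{\mW_k}_F \le R$.

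\textbf{Main obstacle.} The delicate point is the boundary case $\lambda_1 = \norm{\nabla\mathcal{L}_{\text{fit}}(\mathbf{0})}_\infty$, where the pure $L_1$ anchor gives only $L_\delta(\mW) \ge \mathcal{L}_{\text{fit}}(\mathbf{0})$ with no control on $\norm{\mW}$; there one genuinely needs an extra source of coercivity. I would handle it by leaning on Assumption \ref{asm:rsc} / strong convexity of the Gaussian least-squares loss as above, and I should be careful to state that the ALM penalty, while non-negative, cannot be \emph{relied upon} for coercivity because $\tilde h_{\text{AHOC}}$ is globally bounded (Axiom \ref{axiom:boundedness} / Proposition \ref{prop:approx_bound}) — it vanishes at the origin and saturates, so it adds at most a bounded amount and does not grow with $\norm{\mW}_F$. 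A secondary technical point is verifying the first-order lower bound of Step 1 when $\mathcal{L}_{\text{fit}}$ is only locally (not globally) convex; restricting attention to the relevant neighborhood, or simply specializing to the linear-Gaussian quadratic loss used throughout the experiments, sidesteps this.
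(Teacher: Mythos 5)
Your proof is correct, and its skeleton (coercivity of $L_\delta$ $\Rightarrow$ compact sublevel set $\Rightarrow$ descent/sufficient-decrease keeps the iterates inside) matches the paper's; the difference is in how coercivity is obtained. The paper's own argument is more elementary: it only notes that the ALM/constraint term is non-negative and that $\mathcal{L}_{\text{fit}}$ is bounded below (the quadratic SEM loss is non-negative), so $L_\delta(\mW) \ge \mathrm{const} + \lambda_1 \norm{\mW}_1$ and the $L_1$ term alone forces $L_\delta \to \infty$; notably, this makes the stability condition $\lambda_1 \ge \norm{\nabla \mathcal{L}_{\text{fit}}(\mathbf{0})}_\infty$ unnecessary for compactness (it is used elsewhere, in Proposition \ref{prop:stability_condition}, to anchor the trajectory at the origin). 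You instead derive an "anchor inequality" from global convexity of $\mathcal{L}_{\text{fit}}$ plus H\"older, which genuinely exploits the stability hypothesis and yields an explicit radius $R$ in terms of $\lambda_1 - \norm{\nabla\mathcal{L}_{\text{fit}}(\mathbf{0})}_\infty$ — a nicer, more quantitative bound when the inequality is strict — but it forces you to treat the boundary case $\lambda_1 = \norm{\nabla\mathcal{L}_{\text{fit}}(\mathbf{0})}_\infty$ separately via strong convexity. Two small cautions there: Assumption \ref{asm:rsc} as stated is local, centered at $\mW^*$, and restricted to sparse directions, so it does not directly give the global lower bound $\mathcal{L}_{\text{fit}}(\mW) \ge \mathcal{L}_{\text{fit}}(\mathbf{0}) + \frobinner{\nabla\mathcal{L}_{\text{fit}}(\mathbf{0})}{\mW} + \frac{\kappa}{2}\norm{\mW}_F^2$ you invoke; your fallback to $\frac{1}{N}X^\top X \succ 0$ (or simply to non-negativity of the quadratic loss, as the paper does) is the cleaner fix. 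Likewise, non-negativity of the ALM term tacitly uses $\mu_t \ge 0$, which holds because the multiplier updates add $\rho_t \tilde{h} \ge 0$ — worth stating, though the paper glosses over it too.
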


\begin{proof}
    At initialization, $\mW_0 = \mathbf{0}$. The objective value is $L_\delta(\mathbf{0}) = \mathcal{L}_{\text{fit}}(\mathbf{0})$.
    Since the constraint term in $\tilde{L}$ is non-negative and the loss $\mathcal{L}_{\text{fit}}$ is typically coercive (e.g., quadratic for linear SEMs) or bounded below, the term $\lambda_1 \norm{\mW}_1$ dominates the growth of the objective as $\norm{\mW} \to \infty$.
    Specifically, the coercivity of the $L_1$ regularizer ensures that $\lim_{\norm{\mW} \to \infty} L_\delta(\mW) = \infty$.
    Since SPG is a descent method (Theorem \ref{thm:spg_convergence}), $\mW_k \in \mathcal{S}_0$ for all $k$. The compactness of $\mathcal{S}_0$ implies uniform boundedness of the iterates.
    This boundedness ensures that the Lipschitz constant $L_\delta \approx \sup_{\mW \in \mathcal{S}_0} \norm{\nabla^2 \tilde{h}(\mW)}$ is finite.
\end{proof}
\subsection{Proof of Theorem \ref{thm:spg_convergence} (Lipschitz Gradient)}
\label{app:proof_spg_convergence}

\begin{proof}
    The objective uses the smoothed absolute value $s(x) = \sqrt{x^2 + \delta^2}$.
    The Lipschitz constant of the gradient ($L_\delta$) is determined by the maximum eigenvalue of the Hessian $\nabla^2 \tilde{L}$.
    Applying the chain rule to the composite function $h(\mW) = g(\phi(s(\mW)))$, the Hessian contains terms involving the first and second derivatives of the components.

    The dominant term comes from the second derivative of the smoothing function $s(x)$:
    $$ s''(x) = \frac{\delta^2}{(x^2 + \delta^2)^{3/2}} $$
    This function achieves its maximum at $x=0$, where $s''(0) = \frac{\delta^2}{\delta^3} = \frac{1}{\delta}$.

    Other components (matrix exponential $g$, normalization $\phi$) are smooth and have bounded derivatives on the compact unit ball (enforced by $\phi$). Let $C_{rest}$ be the bound for the product of these other derivatives. The Lipschitz constant is bounded by:
    $$ L_\delta \le C_{rest} \cdot \sup_x |s''(x)| = \frac{C_{rest}}{\delta} $$

    Therefore, $L_\delta = \mathcal{O}(1/\delta)$. To ensure convergence of the Proximal Gradient method (Descent Lemma), the step size must satisfy $\eta \le \frac{1}{L_\delta}$, which implies $\eta \propto \delta$.
\end{proof}

\subsection{Proof of Proposition \ref{prop:approx_bound} (Approximation Bound)}
\begin{proof}
    We bound $\abs{\tilde{h}(\mW) - h(\mW)}$.
    The smoothing error is in the core: $\abs{\smoothabs{x}{\delta} - |x|} \le \delta$.
    Thus $\norm{\tilde{\mathbf{M}} - \mathbf{M}}_F \le (1-\alpha) d \delta$.
    The normalization map and matrix exponential are Lipschitz continuous on the relevant domains. Let their combined Lipschitz constant be $L$.
    Then $\abs{\tilde{h}(\mW) - h(\mW)} \le L \norm{\tilde{\mathbf{M}} - \mathbf{M}}_F \le L (1-\alpha) d \delta$.
    This confirms the uniform approximation bound $\mO(\delta)$.
\end{proof}

\section{Formal Derivation of Stability Condition}
\label{app:formal_lambda1_proof}

\begin{proof}
    A point $\mW^*$ is a stationary point of the objective $L_\delta(\mW) = \tilde{L}(\mW) + \lambda_1 \norm{\mW}_1$ if and only if:
    $$ \mathbf{0} \in \nabla \tilde{L}(\mW^*) + \lambda_1 \partial \norm{\mW^*}_1 $$
    where $\partial \norm{\cdot}_1$ is the subdifferential of the element-wise $L_1$ norm.
    At initialization $\mW_0 = \mathbf{0}$:
    1.  Vanishing Constraint Gradient: Due to the smoothing $s(x) = \sqrt{x^2+\delta^2}$, we have $s'(0)=0$. Consequently, $\nabla \tilde{h}_{\text{AHOC}}(\mathbf{0}) = \mathbf{0}$. The gradient of the smooth part reduces to the data fidelity term: $\nabla \tilde{L}(\mathbf{0}) = \nabla \mathcal{L}_{\text{fit}}(\mathbf{0})$.
    2.  Subdifferential at Origin: The subdifferential of the $L_1$ norm at the origin is the unit hypercube:
        $$ \partial \norm{\mathbf{0}}_1 = \{ \mathbf{G} \in \R^{d \times d} : \abs{\mathbf{G}_{ij}} \le 1, \forall i,j \} $$

    Substituting these into the stationarity condition, we require:
    $$ -\nabla \mathcal{L}_{\text{fit}}(\mathbf{0}) \in \lambda_1 \partial \norm{\mathbf{0}}_1 $$
    $$ \iff \abs{[\nabla \mathcal{L}_{\text{fit}}(\mathbf{0})]_{ij}} \le \lambda_1, \quad \forall i,j $$
    $$ \iff \norm{\nabla \mathcal{L}_{\text{fit}}(\mathbf{0})}_\infty \le \lambda_1 $$
    If this condition is violated, the negative gradient direction lies outside the subdifferential cone, forcing the optimizer to take a non-zero step away from $\mathbf{0}$, thus breaking sparsity immediately.
\end{proof}

\section{Additional Visualizations}
\label{app:lambda1_viz}

To visualize the stabilizer effect discussed in Section \ref{subsec:results_exp7}, Figure \ref{fig:exp7_lambda1_trajectory} tracks the optimization trajectory during the first outer loop.

\begin{figure*}[ht]
    \vskip 0.2in
    \begin{center}
        \includegraphics[width=1.0\textwidth]{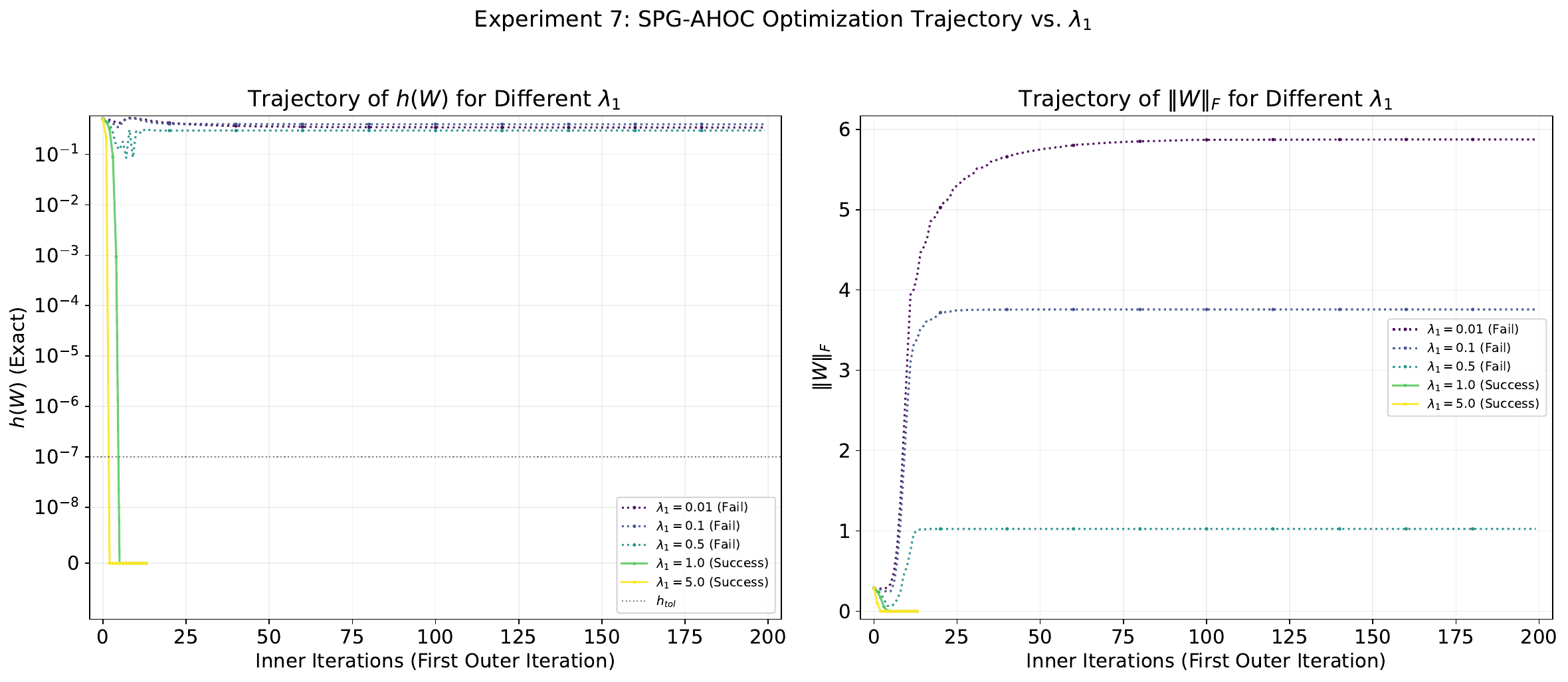}
        \caption{\textbf{Visualizing the Stabilizer Effect (Exp 7).} Optimization trajectory (first outer loop) vs. $\lambda_1$. Small $\lambda_1$ values (e.g., 0.1) lead to a rapid increase in $\norm{\mW}_F$, causing the optimizer to drift into non-convex regions and eventually fail. Larger $\lambda_1$ values (e.g., 1.0, 2.0) effectively constrain $\norm{\mW}_F$ near zero initially, satisfying the stability condition and enabling convergence.}
        \label{fig:exp7_lambda1_trajectory}
    \end{center}
    \vskip -0.2in
\end{figure*}

As confirmed by the trajectories in Figure \ref{fig:exp7_lambda1_trajectory}, when $\lambda_1 < \norm{\nabla \mathcal{L}_{\text{fit}}(\mathbf{0})}_\infty$ (violating the stability condition defined in Proposition \ref{prop:stability_condition}), the iterates immediately break away from the sparse manifold at the origin. This confirms that sufficient regularization is a prerequisite for numerical stability in this continuous framework.
\section{Stability and Limits in High Dimensions (\texorpdfstring{$d=500$}{d=500})}
\label{app:high_dim_stability}

We rigorously tested the theoretical limits of SPG-AHOC on high-dimensional graphs ($d=500, e=500$) with limited samples ($n=1000$). This setting poses significant challenges due to the $O(d^3)$ complexity of the matrix exponential and the non-convex landscape.

\textbf{Empirical Findings:}
\begin{itemize}
    \item \textbf{Baseline Failure:} The standard Adam-based augmented Lagrangian method (EXP) failed to complete the task, exhibiting prohibitive runtime/memory costs (specifically, Out-Of-Memory errors).
    \item \textbf{SPG-AHOC Viability:} Our method successfully converged. We observed a distinct \textbf{Stability-Sensitivity Trade-off} governed by the geometric stabilizer $\lambda_1$:
    \begin{enumerate}
        \item \textit{Weak Stabilization ($\lambda_1 \le 0.1$):} The algorithm diverged (LS Fail/NaN), confirming that the stability condition $\|\nabla \mathcal{L}_{\text{fit}}(\mathbf{0})\|_\infty \le \lambda_1$ is necessary.
        \item \textit{Optimal Regime ($\lambda_1 = 0.5$):} The algorithm achieved an optimal stability-sensitivity balance, converging to a valid DAG with \textbf{TPR=0.44}. This indicates that SPG-AHOC can effectively recover causal structures in high dimensions where standard methods fail.
        \item \textit{Over-Stabilization ($\lambda_1 \ge 1.0$):} The algorithm converged rapidly (< 4s) but yielded an empty graph (TPR=0.0), as the stabilizer strength suppressed all edge signals.
    \end{enumerate}
\end{itemize}

\textbf{Insight:} These results suggest that for massive-scale causal discovery, $\lambda_1$ plays a dual role: it must be large enough to ensure numerical stability (anchoring the trajectory) but small enough to permit signal detection. SPG-AHOC provides a viable path forward in this challenging regime.
\section{Theoretical Analysis of Near-Cyclic Instability}
\label{app:near_cyclic_theory}

We provide a theoretical justification for the failure of AHOC on near-cyclic graphs (Experiment 4, $\rho \approx 1$), despite satisfying the Numerical Boundedness axiom.

While the normalization $\phi(\mW)$ ensures the \textit{value} of $h(\mW)$ and its \textit{gradient} $\nabla h$ are bounded, the \textbf{curvature} (Hessian) becomes ill-conditioned.
Consider the spectral radius formulation. As $\mW$ approaches a cycle, the spectral radius $\rho(\mW \hadamard \mW) \to 1$.
The derivative of the matrix exponential involves the term $\expm(\mathbf{M})$. Even with normalization, the geometry of the feasible set $h(\mW)=0$ forms a "thin manifold" near cyclic configurations.

Specifically, for the AHOC constraint, let $\mathbf{M}$ be the core matrix. Near the boundary of the acyclic polytope:
\begin{enumerate}
    \item The condition number of the Hessian $\kappa(\nabla^2 h)$ diverges.
    \item The "valley" of the optimization landscape becomes extremely narrow.
    \item Standard first-order methods (like Proximal Gradient) bounce between the walls of this narrow valley unless the step size is infinitesimally small ($\eta \to 0$).
\end{enumerate}

Unlike methods with a logarithmic barrier (like $h_{\log\det} = -\log\det(I-\mA)$) which provides a smooth repulsive force approaching infinity as $\rho \to 1$ (effectively a "safety net"), the AHOC constraint is purely penalty-based. Without the barrier property, the optimizer can overshoot into the cyclic region where the gradient direction may drastically shift, leading to the oscillations observed in Figure \ref{fig:exp4_large}.

\section{Detailed Axiomatic Validation Plots}
\label{app:axiomatic_plots}

In this section, we provide enlarged views of the gradient behaviors to validate the Structural Constraint Instability (SCI) analysis and the $L_1$-Synergy axiom.

\begin{figure}[h]
    \centering
    \includegraphics[width=0.85\textwidth]{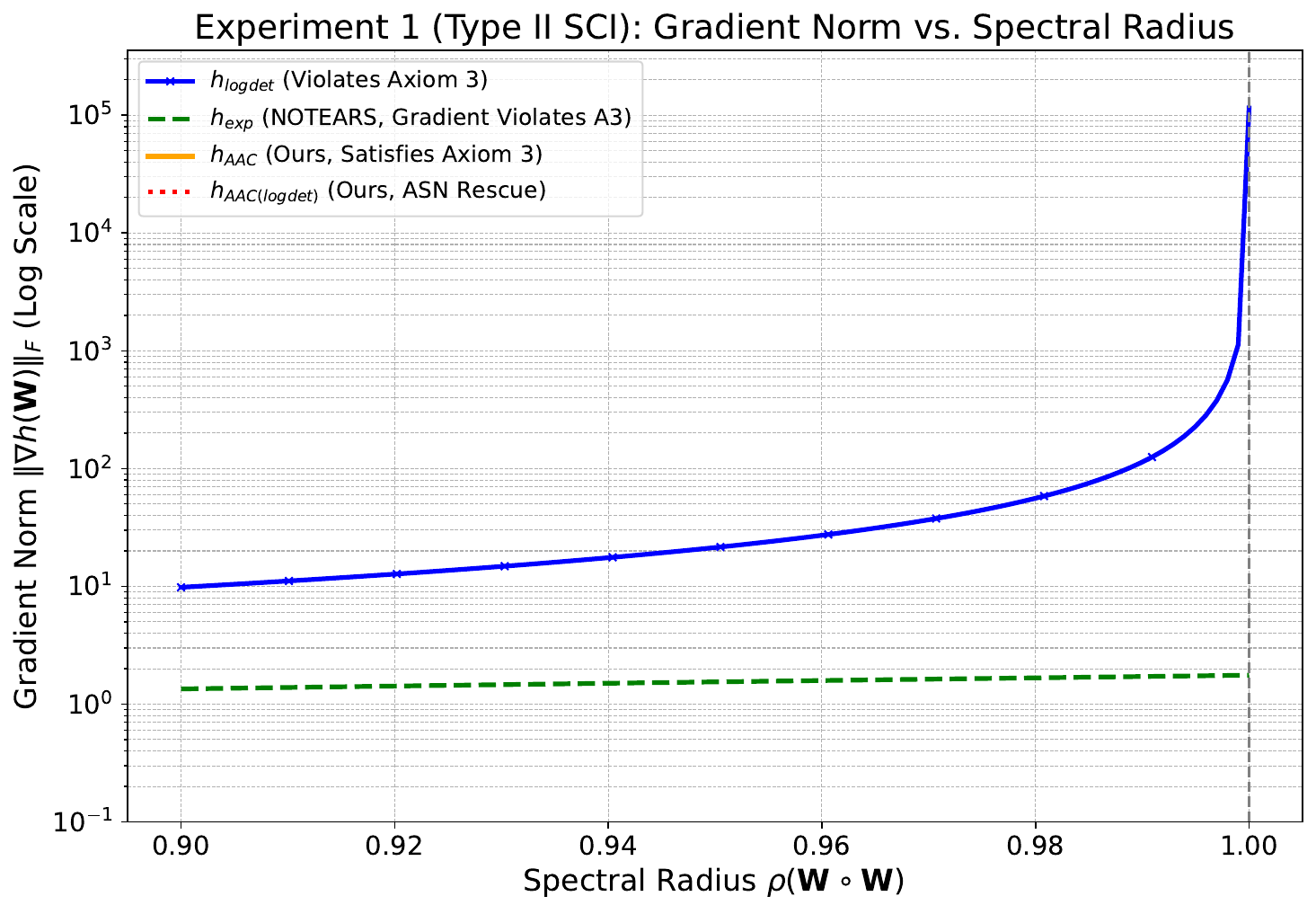}
    \caption{\textbf{Experiment 1 (Type II SCI): Gradient Norm vs. Spectral Radius.}
    Comparison of gradient behaviors as the matrix approaches a cycle ($\rho \to 1$). Standard constraints explode, whereas AHOC and AAC remain bounded (Axiom 3).}
    \label{fig:exp1_large}
\end{figure}

\begin{figure}[h]
    \centering
    \includegraphics[width=0.6\textwidth]{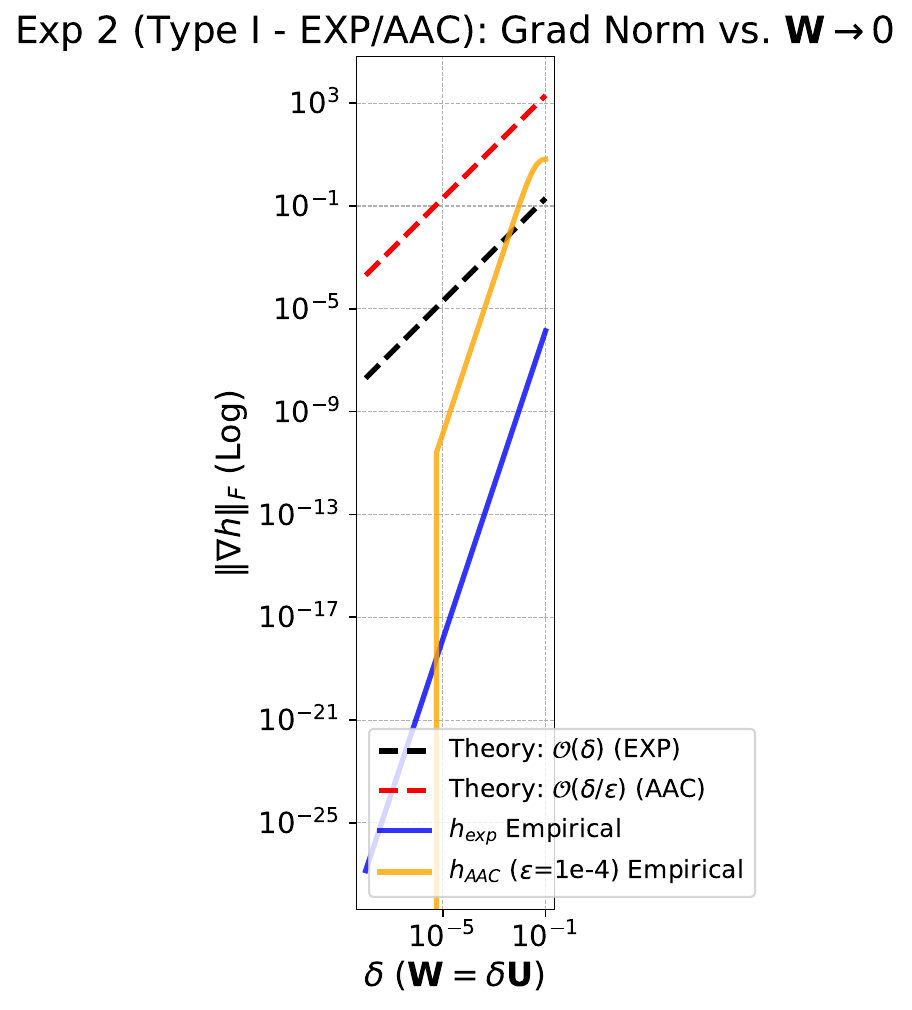}
    \caption{\textbf{Experiment 2a (Type I SCI): Gradient Norm vs. Weight Magnitude.}
    As $\mW \to \mathbf{0}$, the gradients of standard constraints (EXP, AAC) vanish (dashed lines), violating Axiom 2. This leads to optimization stagnation in sparse regimes.}
    \label{fig:exp2a_large}
\end{figure}

\begin{figure}[h]
    \centering
    \includegraphics[width=0.85\textwidth]{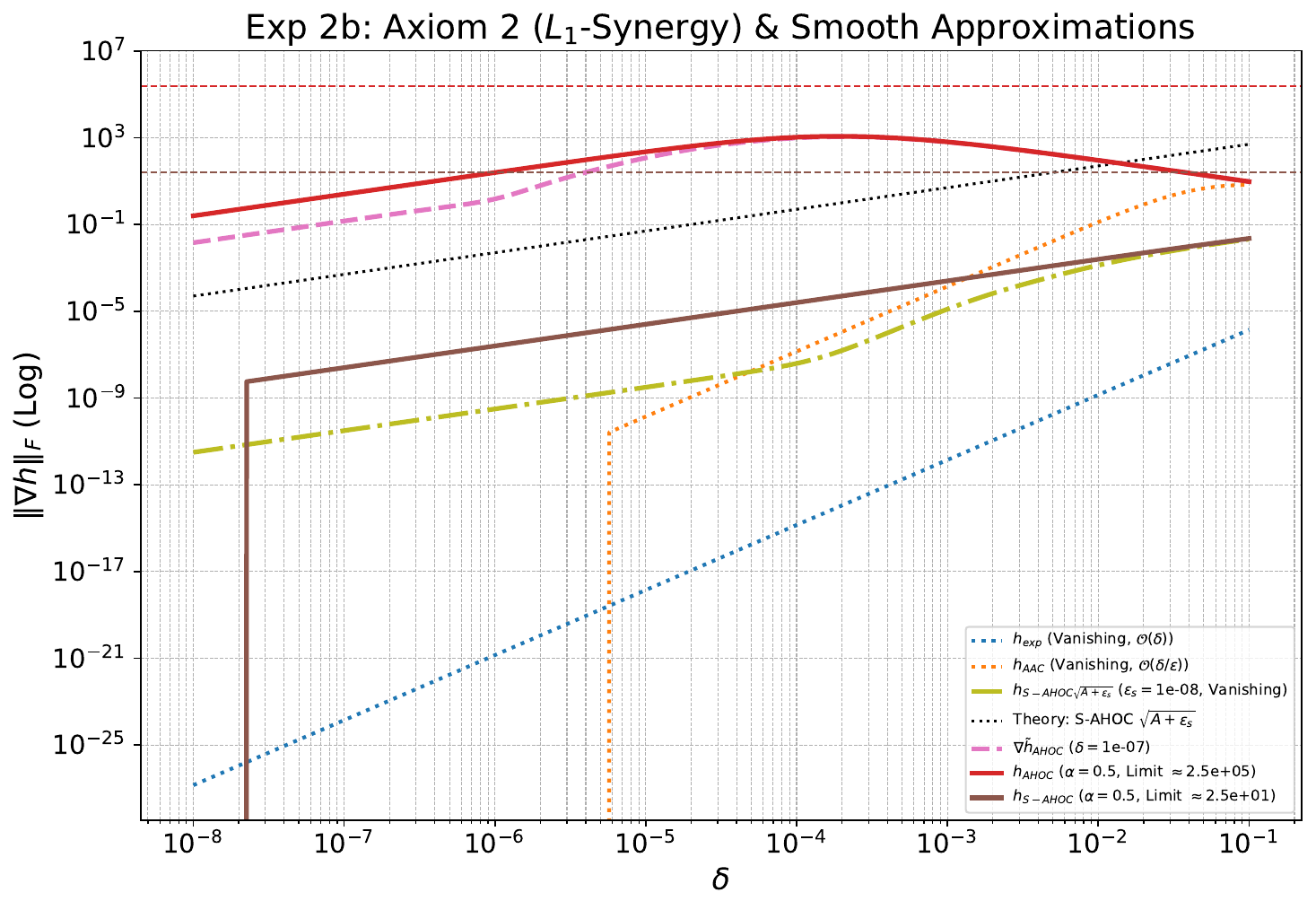}
    \caption{\textbf{Experiment 2b (Axiom 2: $L_1$-Synergy).}
    Validation of the AHOC family. Unlike standard methods, AHOC and S-AHOC maintain a strictly positive gradient norm lower bound even as $\mW \to \mathbf{0}$ (solid lines), ensuring compatibility with $L_1$ regularization.}
    \label{fig:exp2b_large}
\end{figure}

\section{Detailed Robustness Stress Test Plot}
\label{app:robustness_plots}

In this section, we provide the enlarged convergence history for Experiment 4 to clearly distinguish the trajectories of different methods under extreme near-cyclic conditions.

\begin{figure}[h]
    \centering
    \includegraphics[width=1.0\textwidth]{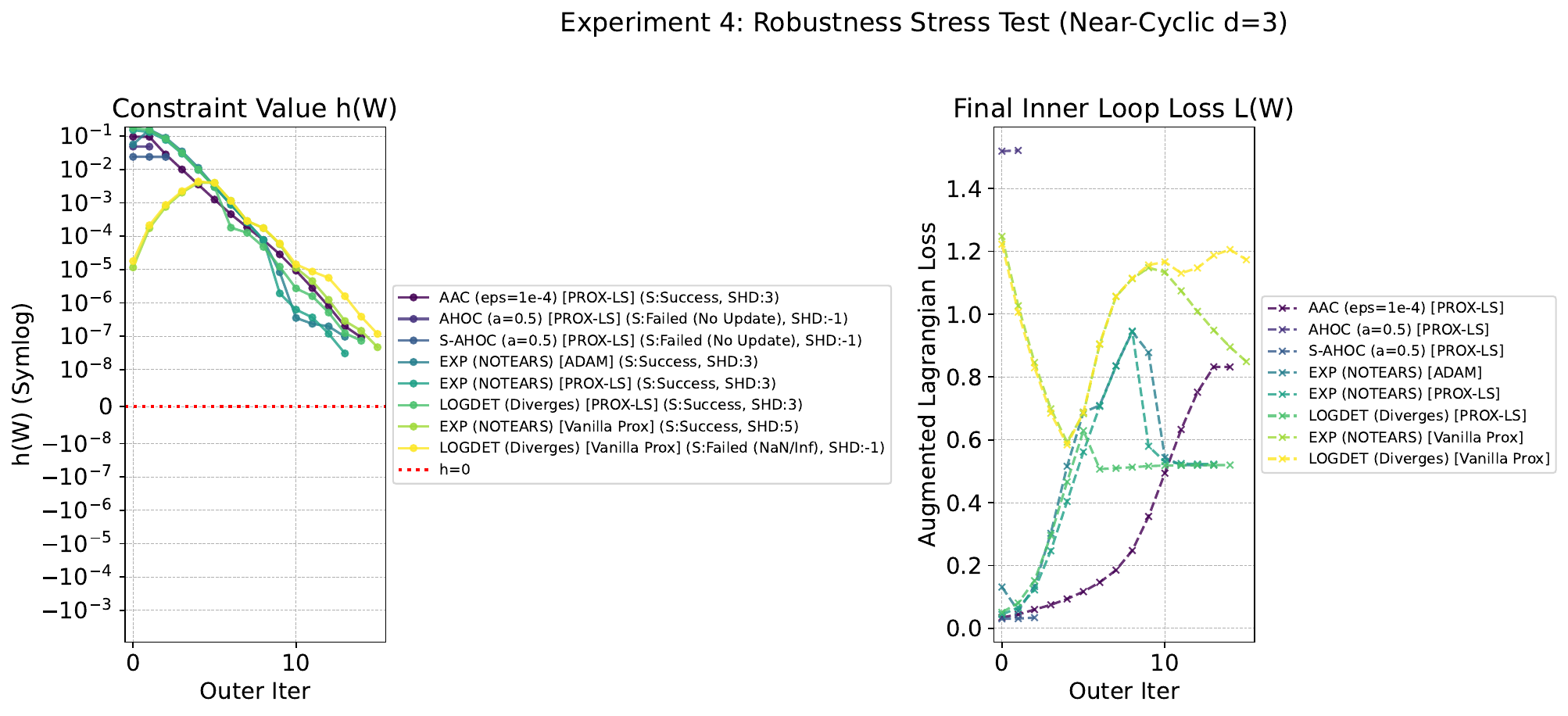}
    \caption{\textbf{Experiment 4: ALM Optimization History on Near-Cyclic Graph.}
    \textit{Left Panel:} The constraint violation $h(\mW)$. Note that the AHOC family (blue/purple lines stuck near bottom) fails to make progress, while AAC and NOTEARS (green/yellow) successfully reduce the violation.
    \textit{Right Panel:} The Augmented Lagrangian objective. The divergence of AHOC highlights the numerical challenges in regions where $\rho(\mW \circ \mW) \to 1$.}
    \label{fig:exp4_large}
\end{figure}

\end{document}